\documentclass{article}
\ifdefined\pdfoutput\pdfoutput=1\fi
\usepackage{arxiv}
\usepackage{iftex}
\ifPDFTeX
  \usepackage[utf8]{inputenc}
  \usepackage[T1]{fontenc}
\else
  \usepackage{fontspec}
\fi
\usepackage{amsmath,amssymb,amsthm}
\usepackage{stmaryrd}
\usepackage{mathtools}
\usepackage{enumitem}
\usepackage{booktabs}
\usepackage{hyperref}
\hypersetup{
	hidelinks,
	linktoc = all,
	pdfdisplaydoctitle,
	breaklinks,
	pdfstartview = Fit,
	unicode,
	pdftitle={Exact semantic readout from compressed vector representations},
	pdfsubject={vector logic, formal semantics, intensional semantics},
	pdfauthor={Daniel Quigley},
	pdfcreator = {LuaLaTeX},
	pdfkeywords={formal semantics, vector logic, semantic space, encoding, embedding, mathematical linguistics},
}
\usepackage[activate={true,nocompatibility},final]{microtype}
\usepackage{doi}
\usepackage{array}
\usepackage{cleveref}
\usepackage[most]{tcolorbox}
\usepackage{tikz}
\usetikzlibrary{cd,arrows.meta,positioning,calc,decorations.pathreplacing}
\newcommand{\R}{\mathbb{R}}
\newcommand{\IM}{\operatorname{im}}
\newcommand{\rs}{\operatorname{row}}
\newcommand{\cs}{\operatorname{col}}
\newcommand{\rk}{\operatorname{rank}}
\newcommand{\Hom}{\operatorname{Hom}}
\newcommand{\vecop}{\operatorname{vec}}
\newcommand{\dg}{\operatorname{deg}}
\newcommand{\bA}{\mathbf{A}}
\newcommand{\bH}{\mathbf{H}}
\newcommand{\bE}{\mathbf{E}}
\newcommand{\bM}{\mathbf{M}}
\newcommand{\bP}{\mathbf{P}}
\newcommand{\bT}{\mathbf{T}}
\newcommand{\bX}{\mathbf{X}}
\newcommand{\bY}{\mathbf{Y}}
\newcommand{\bG}{\mathbf{G}}
\newcommand{\bW}{\mathbf{W}}
\newcommand{\bb}{\mathbf{b}}
\newcommand{\bv}{\mathbf{v}}
\newcommand{\be}{\mathbf{e}}
\newcommand{\bt}{\mathbf{t}}
\newcommand{\bw}{\mathbf{w}}
\newcommand{\bc}{\mathbf{c}}
\newcommand{\ones}{\mathbf{1}}
\newcommand{\Loss}{\mathcal{L}}
\newcommand{\Lex}{\mathcal{P}}
\newcommand{\den}[1]{\llbracket #1 \rrbracket}
\newcommand{\dan}{\textsc{Daniel}}
\newcommand{\tom}{\textsc{Thomas}}

\definecolor{boxcolor1}{RGB}{255,254,230}
\definecolor{boxcolor2}{RGB}{252,227,215}
\definecolor{boxcolor3}{RGB}{225,240,227}
\definecolor{boxcolor4}{RGB}{227,235,246}

\tcbset{
	envbox/.style={
		breakable,
		before skip=1.5\topskip,
		after skip=1.5\topskip,
		left skip=0pt,
		right skip=0pt,
		left=4pt,
		right=4pt,
		top=2pt,
		bottom=2pt,
		lefttitle=4pt,
		righttitle=4pt,
		toptitle=2pt,
		bottomtitle=2pt,
		sharp corners,
		boxrule=0pt,
		titlerule=.4pt,
		coltitle=black,
		colframe=darkgray,
		coltext=black,
		fonttitle=\bfseries,
	}
}
\newtcolorbox{definitionbox}[1][]{envbox, colback=boxcolor1, colbacktitle=boxcolor1, #1}
\newtcolorbox{theorembox}[1][]{envbox, colback=boxcolor2, colbacktitle=boxcolor2, #1}
\newtcolorbox{lemmabox}[1][]{envbox, colback=boxcolor3, colbacktitle=boxcolor3, #1}
\newtcolorbox{propositionbox}[1][]{envbox, colback=boxcolor3, colbacktitle=boxcolor3, #1}
\newtcolorbox{corollarybox}[1][]{envbox, colback=boxcolor4, colbacktitle=boxcolor4, #1}

\newtheoremstyle{mystyle}{3pt}{3pt}{\itshape}{}{\bfseries}{.}{.5em}{}
\theoremstyle{mystyle}
\newtheorem{example}{Example}[section]
\newtheorem{remark}{Remark}[section]

\newtheorem{definition}{Definition}[section]
\renewenvironment{definition}[1][]{%
	\refstepcounter{definition}%
	\begin{definitionbox}[title=Definition~\thedefinition\ifx\relax#1\relax\else\ (#1)\fi]%
	}{%
	\end{definitionbox}
}
\newtheorem{theorem}{Theorem}[section]
\renewenvironment{theorem}[1][]{%
	\refstepcounter{theorem}%
	\begin{theorembox}[title=Theorem~\thetheorem\ifx\relax#1\relax\else\ (#1)\fi]%
	}{%
	\end{theorembox}
}
\newtheorem{lemma}{Lemma}[section]
\renewenvironment{lemma}[1][]{%
	\refstepcounter{lemma}%
	\begin{lemmabox}[title=Lemma~\thelemma\ifx\relax#1\relax\else\ (#1)\fi]%
	}{%
	\end{lemmabox}
}
\newtheorem{proposition}{Proposition}[section]
\renewenvironment{proposition}[1][]{%
	\refstepcounter{proposition}%
	\begin{propositionbox}[title=Proposition~\theproposition\ifx\relax#1\relax\else\ (#1)\fi]%
	}{%
	\end{propositionbox}
}
\newtheorem{corollary}{Corollary}[section]
\renewenvironment{corollary}[1][]{%
	\refstepcounter{corollary}%
	\begin{corollarybox}[title=Corollary~\thecorollary\ifx\relax#1\relax\else\ (#1)\fi]%
	}{%
	\end{corollarybox}
}

\title{Exact semantic readout from compressed vector representations\thanks{Third in a series developing a vector logic for formal semantics \cite{Quigley2025,quigley2026intensional}.}}

\author{ \href{https://orcid.org/0009-0004-7957-1806}{\includegraphics[scale=0.06]{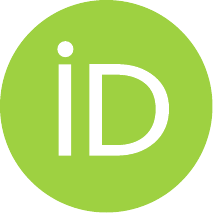}\hspace{1mm}Daniel Quigley} \\
	Center for Possible Minds\\
	Indiana University Bloomington\\
	Bloomington, IN 47408 \\
	\texttt{dgquigle@iu.edu} \\
}

\renewcommand{\headeright}{Preprint}
\renewcommand{\undertitle}{Preprint}
\renewcommand{\shorttitle}{Exact semantic readout in a vector logic}

\begin{document}
\maketitle

\begin{abstract}
    We characterize when compressed vector representations admit exact linear or affine readouts of a finite lexicon's truth conditions: one fixed map per predicate, sending each entity vector to the corresponding truth vector. A necessary and sufficient row-space condition determines existence; the augmented truth matrix has rank $r$, giving minimum dimension $r$ in the linear case, and $r-1$ in the affine. Exact readouts return values in a shared truth basis on which Boolean connectives act unchanged; separability alone requires an intervening threshold. For binary relations, exact bilinear readout of identity or strict total order requires linearly independent entity vectors. Experiments with GloVe and word2vec distinguish exact affine recovery, linear separability, and held-out prediction: most predicates are strictly separable, but none admits an exact affine readout from the pretrained embeddings. Supervised transductive training attains exact affine recovery to numerical precision at every tested dimension meeting the bound. At the embeddings' original dimension, geometries constrained to exact linear recovery retain 98--99 percent of the pretrained variance on the feature norms, and 80--83 percent on the WordNet lexicon.

\end{abstract}

\keywords{formal semantics \and vector logic \and semantic space \and encoding \and embedding \and mathematical linguistics}

\section{Introduction}\label{sec:introduction}

Montagovian semantics represents entities and predicates in typed domains, with composition governed by the model’s functions \cite{Montague1974,Heim1998}. Distributional embeddings represent lexical items as vectors, whose geometry reflects patterns of use \cite{Boleda2020}. When these vectors represent the entities of a finite semantic model, which predicates can be recovered by linear readouts, and what does compression prevent?

A vector logic for formal semantics \cite{Quigley2025,quigley2026intensional} supplies an exact construction: each primitive domain element receives its own basis vector, and semantic functions extend linearly from free carriers, preserving composition along primitive intermediate types. Linear independence makes these lifts possible. Learned embeddings generally have fewer dimensions than entities, and, therefore, introduce linear dependences that may obstruct the lifts.

An embedding lookup with matrix $\bE \in \R^{V \times d}$ is equivalent, on one-hot inputs, to a bias-free linear map \cite{raschka-faq}. It, therefore, factors through the free entity carrier as $\bE^\top : \R^V \to \R^d$. This identifies two regimes: a \emph{free} geometry has linearly independent entity vectors, and admits every semantic lift; a \emph{compressed} geometry has dependent vectors, and admits only those lifts compatible with its dependences. Compression need not erase entity identity: distinct columns still permit arbitrary predicate lookup on a finite domain; we are interested, then, in which predicates remain linearly recoverable.

For a monadic lexicon with truth matrix $\bT$, we show that exact readouts into a shared truth basis exist precisely when the geometry’s row space contains $\rs[\bT;\ones]$. The rank of this augmented truth matrix gives the minimum dimension, while its kernel specifies the admissible dependences among entity vectors. A restricted lexicon can, therefore, admit exact compression, although requiring every predicate forces the free regime. Once atomic readouts return the truth basis, Boolean connectives act unchanged.

Exact readout is stronger than linear separability: a score may distinguish true instances from false ones without, itself, returning their truth values. This distinction connects linear probing \cite{belinkov2022probing} to the vector logic. Whether such a readout exists, and whether it can be learned from a subset of entities, are separate questions. We test exact recovery, separability, and held-out prediction on distributional embeddings, then examine how much pretrained variance survives supervised training toward exact recovery. Figure~\ref{fig:architecture} summarizes the program and the dependencies among these questions.

\begin{figure}[ht]
\centering
\begin{tikzpicture}[
  font=\small,
  node distance=6mm and 5mm,
  box/.style={
    draw=black!55,
    rounded corners=2pt,
    align=center,
    inner sep=5pt,
    minimum height=9mm,
    text width=3.1cm
  },
  core/.style={
    box,
    text width=7cm,
    fill=black!10
  },
  result/.style={box, fill=black!3},
  empirical/.style={box, fill=black!6},
  arrow/.style={-{Latex[length=2mm]}, semithick}
]

\node[box] (free) {Free carrier};

\node[box, below=of free] (geometry) {
  Embedding $H=E^\top$
};

\node[core, below=of geometry] (criterion) {
  Exact readout\\[2pt]
  $\operatorname{row}[T;\mathbf1]
    \subseteq \operatorname{row}(H)$
};

\node[result, below=9mm of criterion] (composition) {
  Semantic computation
};

\node[result, left=of composition] (structure) {
  Compression bounds
};

\node[result, right=of composition] (relaxation) {
  Defect and separability
};

\node[empirical, below=of structure] (training) {
  Supervised geometries
};

\node[empirical, below=of relaxation] (measurement) {
  Pretrained geometries
};

\draw[arrow] (free) -- (geometry);
\draw[arrow] (geometry) -- (criterion);

\draw[arrow] (criterion) -- (composition);
\draw[arrow] (criterion.west) -| (structure.north);
\draw[arrow] (criterion.east) -| (relaxation.north);

\draw[arrow] (structure) -- (training);
\draw[arrow] (relaxation) -- (measurement);

\end{tikzpicture}
\caption{Dependencies among the formal results and empirical analyses.}
\label{fig:architecture}
\end{figure}

\section{Vector logic}\label{sec:vl}

We recall from \cite{Quigley2025} the definitions and the homomorphism theorem, and from \cite{quigley2026intensional} the index-sort material. We refrain from re-proving the relevant content; see, instead, papers proper therein.

\subsection{Extensional models and free carrier}\label{subsec:extensional}

Types are generated from $e$ and $t$ by $\langle \sigma, \tau \rangle$. A typed extensional model $\mathcal{M}_{ext} = \langle (\mathcal{D}_\tau)_\tau, \mathcal{I} \rangle$ has entity domain $\mathcal{D}_e$, truth domain $\mathcal{D}_t = \{1, 0\}$, function domains $\mathcal{D}_{\langle \sigma, \tau \rangle} = \mathcal{D}_\tau^{\mathcal{D}_\sigma}$, and an interpretation $\mathcal{I}$; denotations $\den{\cdot}$ follow the recursion of \cite{Heim1998}. Throughout, let $\mathcal{D}_e$ be finite\footnote{The finiteness restriction is what makes the rank statements below meaningful; the extensional theorem itself holds for domains of any cardinality.}, with $|\mathcal{D}_e| = V$ and elements $d_1, \dots, d_V$.

The vector space model $\mathcal{M}_{\mathcal{S}}$ assigns to each domain a real vector space $\mathcal{S}_{\mathcal{D}_\tau}$ and an injection $h_\tau : \mathcal{D}_\tau \to \mathcal{S}_{\mathcal{D}_\tau}$. The construction takes the form
\[
  h_e(d_i) = \be_i \in \R^V, \qquad h_t(1) = \bb_1 = \begin{bmatrix}1\\0\end{bmatrix}, \qquad h_t(0) = \bb_0 = \begin{bmatrix}0\\1\end{bmatrix},
\]
where $\be_i$ is the $i$-th standard basis vector, and for a function type sends $f \in \mathcal{D}_{\langle \sigma, \tau \rangle}$ to the linear map $L_f$, determined on the basis $\{\be_a\}_{a \in \mathcal{D}_\sigma}$ of the free carrier of $\sigma$ by $L_f\, \be_a = h_\tau(f(a))$; a linear map is determined by its values on a basis, so this fixes $L_f$ uniquely. Note that \cite{Quigley2025} embeds every domain, function-type domains included, by basis vectors, and defines the lift $h_f$ pointwise on the image of $h_\sigma$ through the left inverse $h_\sigma^{-1}$; \cite{quigley2026intensional} attaches to each type a free carrier $\mathcal{F}_\tau$, with basis indexed by $\mathcal{D}_\tau$, and an operator carrier $\mathcal{S}_\tau$, with $\mathcal{S}_{\langle \sigma, \tau \rangle} = \Hom(\mathcal{F}_\sigma, \mathcal{S}_\tau)$, so that the free carrier stands in argument position, and the two carriers coincide on primitive types. We adopt the second convention, since the results below concern linear readouts on the entity space, whose type is primitive, and we call the family $\{\mathcal{F}_\tau\}$ the free carrier: elements of primitive domains go to basis vectors, and the primitive spaces are free on their domains. In particular, a monadic predicate $P \in \mathcal{D}_{\langle e, t \rangle}$ becomes the $2 \times V$ matrix
\[
  \bM_P = \bigl[\, h_t(P(d_1)) \;\; \cdots \;\; h_t(P(d_V)) \,\bigr],
\]
whose $i$-th column is the truth vector that $P$ assigns to $d_i$, and $\bM_P \be_i = h_t(P(d_i))$ is functional application. Truth-functional connectives become fixed matrices on tensor powers of the truth space, an $n$-ary connective $c$ acting as a $2 \times 2^n$ matrix $\bM_c$ on $\bigotimes_i h_t(t_i)$; for instance
\[
  \bM_\neg = \begin{bmatrix}0&1\\1&0\end{bmatrix}, \qquad
  \bM_\wedge = \begin{bmatrix}1&0&0&0\\0&1&1&1\end{bmatrix},
\]
in the ordering $\bb_1 \otimes \bb_1, \bb_1 \otimes \bb_0, \bb_0 \otimes \bb_1, \bb_0 \otimes \bb_0$ of the tensor basis, after \cite{Mizraji1992,Westphal2005}.

\subsection{Homomorphism theorem}\label{subsec:homothm}

\begin{theorem}\label{thm:homo}
    For every extensional model $\mathcal{M}_{ext}$, there exist injections $\{h_\tau\}$ into vector spaces $\{\mathcal{S}_{\mathcal{D}_\tau}\}$ such that every semantic function $f : \mathcal{D}_\sigma \to \mathcal{D}_\tau$ has a unique linear lift $L_f : \mathcal{F}_\sigma \to \mathcal{S}_{\mathcal{D}_\tau}$ from the free carrier of $\sigma$, with $L_f(\bb_a) = h_\tau(f(a))$ for every $a \in \mathcal{D}_\sigma$, where $\bb_a$ is the free encoding of $a$, which coincides with $h_\sigma(a)$ when $\sigma$ is primitive; $n$-ary functions lift to multilinear maps on the free carriers, and composition of semantic functions corresponds to composition of lifts along primitive intermediate types.
\end{theorem}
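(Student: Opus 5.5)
The construction is the one recalled in \S\ref{subsec:extensional}, so the plan is to verify its properties in turn. Since $\mathcal{D}_e$ is finite (and the argument does not need this), I would take $\mathcal{F}_\tau$ to be the free vector space on $\mathcal{D}_\tau$, with basis $\{\bb_a\}_{a\in\mathcal{D}_\tau}$. For primitive $\tau$, set $\mathcal{S}_{\mathcal{D}_\tau}=\mathcal{F}_\tau$ and $h_\tau(a)=\bb_a$; this gives $h_e(d_i)=\be_i$ and $h_t(1)=\bb_1$, $h_t(0)=\bb_0$. For function types, set $\mathcal{S}_{\langle\sigma,\tau\rangle}=\Hom(\mathcal{F}_\sigma,\mathcal{S}_\tau)$ and $h_{\langle\sigma,\tau\rangle}(f)=L_f$. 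The definition proceeds by induction on types.

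\textbf{Injectivity.} On primitive types, $h_\tau$ is injective because distinct basis vectors are distinct. On $\langle\sigma,\tau\rangle$, suppose $L_f=L_g$. Evaluating at each $\bb_a$ gives $h_\tau(f(a))=h_\tau(g(a))$. By the inductive hypothesis that $h_\tau$ is injective, $f(a)=g(a)$ for every $a$, so $f=g$.

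\textbf{Existence and uniqueness of lifts.} For a semantic function $f:\mathcal{D}_\sigma\to\mathcal{D}_\tau$, I would apply the universal property of the free vector space. The set map $a\mapsto h_\tau(f(a))$ extends uniquely to a linear map $L_f:\mathcal{F}_\sigma\to\mathcal{S}_{\mathcal{D}_\tau}$, because a linear map is determined by arbitrary prescribed values on a basis. This works for arbitrary cardinality: linear combinations are finite, so no convergence issue arises.

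\textbf{$n$-ary functions.} For $f:\mathcal{D}_{\sigma_1}\times\cdots\times\mathcal{D}_{\sigma_n}\to\mathcal{D}_\tau$, I would use $\mathcal{F}_{\sigma_1}\otimes\cdots\otimes\mathcal{F}_{\sigma_n}\cong$ the free space on $\prod_i\mathcal{D}_{\sigma_i}$, with basis $\bb_{a_1}\otimes\cdots\otimes\bb_{a_n}$. Applying the same universal property on this basis gives a unique linear map from the tensor product. By the universal property of the tensor product, this is equivalently a unique multilinear map satisfying $(\bb_{a_1},\dots,\bb_{a_n})\mapsto h_\tau(f(a_1,\dots,a_n))$. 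The connective matrices $\bM_c$ are instances of this with every $\sigma_i=t$.

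\textbf{Composition.} Take $f:\mathcal{D}_\rho\to\mathcal{D}_\sigma$ and $g:\mathcal{D}_\sigma\to\mathcal{D}_\tau$ with $\sigma$ primitive, so that $\mathcal{S}_{\mathcal{D}_\sigma}=\mathcal{F}_\sigma$ and $h_\sigma(b)=\bb_b$. Then $L_g\circ L_f$ is well typed. On each basis vector,
\[
  L_g L_f\,\bb_a=L_g\,\bb_{f(a)}=h_\tau(g(f(a))).
\]
By uniqueness of lifts, $L_g\circ L_f=L_{g\circ f}$.

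\textbf{Main obstacle.} The step needing care is the restriction to primitive intermediate types. When $\sigma$ is a function type, $L_f$ lands in the operator carrier $\mathcal{S}_\sigma$, while $L_g$ is defined on the free carrier $\mathcal{F}_\sigma$, so the composite is not literally defined. I would state this explicitly and not claim more. Optionally, one could note that composing with the linear map $\mathcal{F}_\sigma\to\mathcal{S}_\sigma$ given by $\bb_b\mapsto h_\sigma(b)$ goes in the wrong direction, and that recovering $\bb_b$ from $h_\sigma(b)$ requires the left inverse $h_\sigma^{-1}$, which is the pointwise device of \cite{Quigley2025}. The remaining steps are routine checks.
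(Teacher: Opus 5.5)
Your proposal is correct and follows the construction the paper recalls in Section~\ref{subsec:extensional}: free carriers with basis vectors on primitive types, operator carriers $\Hom(\mathcal{F}_\sigma,\mathcal{S}_\tau)$ on function types, lifts fixed uniquely by their values on a basis, and composition checked on basis vectors along primitive intermediate types. The paper defers the full proof to \cite{Quigley2025,quigley2026intensional}, and your argument supplies the routine verifications it leaves to those sources: the inductive injectivity check, the tensor-product treatment of $n$-ary functions, and the reason non-primitive intermediate types fail, namely that the comparison map runs the wrong way, from $\mathcal{F}_\sigma$ to $\mathcal{S}_\sigma$.
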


The square
\begin{center}
    \begin{tikzcd}[row sep=large, column sep=large]
      \mathcal{D}_\sigma \arrow[r, "f"] \arrow[d, "h_\sigma"'] & \mathcal{D}_\tau \arrow[d, "h_\tau"] \\
      \mathcal{S}_{\mathcal{D}_\sigma} \arrow[r, "L_f"'] & \mathcal{S}_{\mathcal{D}_\tau}
    \end{tikzcd}
\end{center}
commutes for every $f$ with primitive argument type, which covers every function below, so the recursive evaluation of any expression in $\mathcal{M}_{ext}$ has a step-by-step counterpart in $\mathcal{M}_{\mathcal{S}}$ once arguments are carried in their free encodings. The injections are non-surjective by design: $\IM(h_t) = \{\bb_1, \bb_0\}$ is a proper subset of $\R^2$, and denotation is confined to the images. 

Theorem~\ref{thm:homo} is existential: the free carrier is one family of injections for which the lifts exist, and other families are the subject of Section~\ref{sec:regimes}. The commuting square for a single function is \cite{Quigley2025}, where the lift is defined on the image alone; linearity on the span for primitive argument types, the multilinear lift of $n$-ary functions, and the operator carriers of function types follow \cite{quigley2026intensional}, whose descent theorems determine when a functional of a function domain acts linearly on operator encodings as well: on a power set, exactly the constants, the ultrafilter indicators, and their complements, which on a finite domain are the constants, Montague's individuals $\lambda P.\,P(d)$, and their negations.

\subsection{Index sorts}\label{subsec:indexsorts}

The intensional layer adjoins index sorts (worlds and times, among others), collected in a compound index space $S = \prod_\sigma \mathcal{D}_\sigma$, with its own free carrier $h_S(s) = \be_s$, the carrier $\mathcal{F}_s$ of the compound index type $s$. An intension $g : S \to \mathcal{D}_\tau$ becomes the linear operator $\mathcal{S}_S \to \mathcal{S}_{\mathcal{D}_\tau}$, with $\be_s \mapsto h_\tau(g(s))$; a proposition $\varphi$ becomes $\bP_\varphi \in \R^{2 \times |S|}$, with truth profile $\bv(\varphi) \in \{0,1\}^{|S|}$ its top row. Over a single discrete sort of worlds $W = \{w_1, \dots, w_n\}$ with accessibility matrix $\bA \in \{0,1\}^{n \times n}$, the modal operators of \cite{quigley2026intensional} read
\begin{equation}\label{eq:box}
  (\Box\varphi)(w_i) = 1 \iff (\bA\, (\ones - \bv(\varphi)))_i = 0,
  \qquad
  (\Diamond\varphi)(w_i) = 1 \iff (\bA\, \bv(\varphi))_i > 0,
\end{equation}
a linear accumulation followed by a decision; when the out-degree $\dg_\bA(w_i) = \sum_j \bA_{ij}$ is finite, as it is here, the first condition is equivalent to $(\bA\, \bv(\varphi))_i = \dg_\bA(w_i)$. Further, operators are defined over measure frames, in which the counting measure of the discrete case is but one choice among others; a finite geometry matrix carries the discrete case with finitely many indices, the case to which we restrict ourselves in \Cref{sec:index}.

\section{Embedding lookup on free carrier}\label{sec:lookup}

The observation that an embedding layer is a linear map on one-hot inputs is due to \cite{raschka-faq,raschka-llms}, who also explains why frameworks introduce a transpose and why implementations use the lookup form at all. We develop that here in detail, and apply it to our vector logic.

\subsection{Setup}\label{subsec:setup}

Let $V \ge 1$ be the vocabulary size, and $d \ge 1$ the embedding dimension. An embedding matrix is any $\bE \in \R^{V \times d}$, with rows $\bE_i \in \R^{1 \times d}$ and entries $E_{ij}$. The lookup is $\ell_\bE(i) = \bE_i$ for $i \in \{1, \dots, V\}$, and the bias-free linear layer with weight $\bE$ is $L_\bE(x) = x^{\top} \bE$ for $x \in \R^V$. Outputs are row vectors, such that batches stack vertically. Under the reading of the vocabulary as the entity domain, $\be_i = h_e(d_i)$ and $L_\bE(\be_i) = (\bE^{\top} \be_i)^{\top}$; the composite $\bE^{\top} \circ h_e$ is a second injection of $\mathcal{D}_e$ into a vector space, whenever the rows of $\bE$ are distinct.

\begin{remark}
    The vocabulary of a language model and the entity domain of a model are different objects; the identification $\be_i = h_e(d_i)$ is the vocabulary as a set of names, one per entity. Everything below concerns the geometry matrix of Section~\ref{sec:regimes}, and holds for whatever the columns index; the vocabulary reading is the instance in which the geometry is a trained embedding matrix.
\end{remark}

\subsection{Forward pass}\label{subsec:forward}

\begin{proposition}[single token]\label{prop:single}
    For every $i \in \{1, \dots, V\}$, $L_\bE(\be_i) = \ell_\bE(i)$.
\end{proposition}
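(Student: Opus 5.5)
The plan is a direct entrywise computation from the definitions in Section~\ref{subsec:setup}. By definition $L_\bE(\be_i) = \be_i^{\top}\bE$, which is a $1 \times d$ row vector, and $\ell_\bE(i) = \bE_i$, also a $1 \times d$ row vector. So the two sides have the same shape, and it suffices to compare them coordinate by coordinate.

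For each $j \in \{1,\dots,d\}$, I will expand the product:
\[
  (\be_i^{\top}\bE)_j = \sum_{k=1}^{V} (\be_i)_k\, E_{kj} = \sum_{k=1}^{V} \delta_{ik} E_{kj} = E_{ij} = (\bE_i)_j ,
\]
using $(\be_i)_k = \delta_{ik}$ for the standard basis vector. Since every coordinate agrees, $L_\bE(\be_i) = \bE_i = \ell_\bE(i)$.

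As a cross-check, I will also verify the column-vector form $L_\bE(\be_i) = (\bE^{\top}\be_i)^{\top}$ stated in the setup. The product $\bE^{\top}\be_i$ picks out the $i$-th column of $\bE^{\top}$, which is $\bE_i^{\top}$, and transposing returns $\bE_i$. This form is the one used later to read $\bE^{\top}\circ h_e$ as an injection of $\mathcal{D}_e$.

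There is no real obstacle here. The only point needing care is the row-versus-column convention: outputs are row vectors, $\be_i$ is a column vector, and the transpose in $x^{\top}\bE$ must be tracked so that both sides are $1\times d$.
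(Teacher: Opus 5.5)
Your proof is correct and is essentially the paper's own argument: fix a coordinate $j$, expand $(\be_i^{\top}\bE)_j$ using $(\be_i)_k = \delta_{ik}$ to obtain $E_{ij}$, and conclude because $j$ was arbitrary. The column-vector cross-check via $(\bE^{\top}\be_i)^{\top}$ is a harmless extra that the paper does not include.
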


\begin{proof}\label{prf:single}
    Fix a column $j$. By the definition of the matrix product,
    \[
      (\be_i^{\top} \bE)_j = \sum_{k=1}^{V} (\be_i)_k E_{kj} = \sum_{k=1}^{V} \delta_{ik} E_{kj} = E_{ij},
    \]
    the $j$th entry of $\bE_i$. Since $j$ was arbitrary, $\be_i^{\top} \bE = \bE_i$.
\end{proof}

\begin{corollary}[batch]\label{cor:batch}
    Let $i_1, \dots, i_n \in \{1, \dots, V\}$ and let $\bX \in \R^{n \times V}$ have rows $\be_{i_1}^{\top}, \dots, \be_{i_n}^{\top}$. Then row $t$ of $\bX \bE$ equals $\bE_{i_t}$ for every $t$, so $\bX \bE$ is the vertical stack of $\ell_\bE(i_1), \dots, \ell_\bE(i_n)$.
\end{corollary}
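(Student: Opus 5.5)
The plan is to reduce Corollary~\ref{cor:batch} to Proposition~\ref{prop:single} one row at a time, using only the row-wise description of a matrix product. The relevant fact is that for any $\bX \in \R^{n \times V}$ and $\bE \in \R^{V \times d}$, row $t$ of $\bX\bE$ equals (row $t$ of $\bX$) times $\bE$. This holds because, by the definition of the product,
\[
(\bX\bE)_{tj} = \sum_{k=1}^{V} X_{tk} E_{kj},
\]
and this sum is exactly the $j$th entry of $\bX_t \bE$, where $\bX_t$ denotes the $t$th row of $\bX$.

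First I state this row-extraction identity and justify it from the entrywise formula above, in the same style as the proof of Proposition~\ref{prop:single}.

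Second, I fix $t \in \{1,\dots,n\}$. By hypothesis $\bX_t = \be_{i_t}^\top$, so row $t$ of $\bX\bE$ is $\be_{i_t}^\top \bE$. Since $L_\bE(x) = x^\top \bE$, this equals $L_\bE(\be_{i_t})$, and Proposition~\ref{prop:single} then gives $L_\bE(\be_{i_t}) = \ell_\bE(i_t) = \bE_{i_t}$.

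Third, $t$ was arbitrary, and a matrix is determined by its rows. Hence $\bX\bE$ is the vertical stack of $\ell_\bE(i_1),\dots,\ell_\bE(i_n)$, which is the claim.

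None of these steps is a real obstacle. The only point needing care is the bookkeeping of the row-vector convention: $L_\bE$ acts by right multiplication $x^\top \bE$, so the rows of $\bX$ are the transposed one-hot vectors. Repeated indices $i_s = i_t$ need no separate treatment, since the argument is pointwise in $t$.
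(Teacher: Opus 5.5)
Your proposal is correct and is essentially the paper's proof: row $t$ of $\bX\bE$ is $\be_{i_t}^{\top}\bE = \bE_{i_t}$ by Proposition~\ref{prop:single} applied to each row independently, and repeated identifiers need no separate treatment. The only difference is that you derive the row-extraction identity from $(\bX\bE)_{tj} = \sum_{k} X_{tk}E_{kj}$, where the paper uses it tacitly.
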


\begin{proof}\label{prf:batch}
    Row $t$ of $\bX \bE$ is $\be_{i_t}^{\top} \bE = \bE_{i_t}$ by Proposition~\ref{prop:single}, applied to each row independently; repeated identifiers among $i_1, \dots, i_n$ are, likewise, covered.
\end{proof}

\begin{remark}
    We should note that, for implementation's sake, frameworks orient the weight of a linear layer differently. In PyTorch, for example, a bias-free linear layer, with input dimension $V$ and output dimension $d$, stores $\bW \in \R^{d \times V}$, and computes $x \mapsto x \bW^{\top}$ \cite{pytorch-linear}; \cite{raschka-faq} identifies this transpose as the usual source of confusion in reproducing the identity in code.
\end{remark}

\begin{corollary}[orientation]\label{cor:orient}
    With $\bW = \bE^{\top}$, the map $x \mapsto x \bW^{\top}$ agrees with $L_\bE$ on $\R^V$ and, hence, with $\ell_\bE$ on one-hot inputs.
\end{corollary}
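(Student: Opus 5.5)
The plan is to substitute $\bW = \bE^{\top}$ into the framework's formula and then reduce to $L_\bE$ and Proposition~\ref{prop:single}. Since $(\bE^{\top})^{\top} = \bE$, for every $x \in \R^V$ we have
\[
  x \bW^{\top} = x (\bE^{\top})^{\top} = x^{\top} \bE = L_\bE(x).
\]
Here $x$ is read as a row vector on the left, consistent with the row-vector output convention of Section~\ref{subsec:setup}. The only bookkeeping is to check the orientation of $x$. In the framework's convention, inputs are rows of length $V$. In the paper's notation, $L_\bE(x) = x^{\top}\bE$ treats $x$ as a column and transposes it. Both expressions are therefore the same $1 \times V$ row multiplied by the $V \times d$ matrix $\bE$, and the shapes agree: $\bW^{\top}$ is $V \times d$, as required.

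Agreement on all of $\R^V$ is then immediate. Specializing to $x = \be_i$ and invoking Proposition~\ref{prop:single}, we get $\be_i \bW^{\top} = L_\bE(\be_i) = \ell_\bE(i)$ for every $i$. Batches follow row by row, exactly as in Corollary~\ref{cor:batch}.

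There is no genuine obstacle. The one point needing care is stating explicitly the identification of a vector $x \in \R^V$ with its row form. This identification is what makes $x\bW^{\top}$ and $x^{\top}\bE$ literally the same product rather than transposes of one another.
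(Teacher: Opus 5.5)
Your proof is correct and follows the paper's route: the paper also notes $\bW^{\top} = \bE$, so $x\bW^{\top} = x\bE$ agrees with $L_\bE$ on every row vector, and then reduces the one-hot claim to the earlier results. The only difference is citation: you invoke Proposition~\ref{prop:single} directly (and Corollary~\ref{cor:batch} for batches), where the paper cites Corollary~\ref{cor:batch} alone.
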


\begin{proof}\label{prf:orient}
    $\bW^{\top} = \bE$, so $x \bW^{\top} = x \bE$ for every row vector $x$; apply Corollary~\ref{cor:batch}.
\end{proof}

$\bW = \bE^{\top}$ is the geometry matrix: its columns are the entity vectors $\bE_i^{\top}$, and the stored parameter of the framework's linear layer is the geometry itself.

\subsection{Backward pass}\label{subsec:backward}

When considering the backward pass, we are, essentially, encountering gradients. The identity extends to gradients with respect to the parameters, which makes the implementations interchangeable during training, and covers accumulation on repeated identifiers. Let $\Loss$ be a scalar loss depending on the parameters only by $\bY = \bX \bE \in \R^{n \times d}$, and write $\bG = \partial \Loss / \partial \bY \in \R^{n \times d}$ with rows $\bG_1, \dots, \bG_n$. In the lookup, the backward pass is defined as a scatter-add: the gradient with respect to $\bE$ is initialized to zero, and $\bG_t$ is added to row $i_t$ for each $t$.

\begin{proposition}[gradient]\label{prop:grad}
    Under the linear-layer parameterization, $\partial \Loss / \partial \bE = \bX^{\top} \bG$, whose row $k$ equals $\sum_{t :\, i_t = k} \bG_t$, with the empty sum being zero. 
\end{proposition}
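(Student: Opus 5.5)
The plan is to compute the gradient entrywise with the chain rule and then read off its rows; there are two steps.

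First, since $\Loss$ depends on $\bE$ only through $\bY = \bX\bE$, the chain rule gives, for each entry $E_{kj}$,
\[
  \frac{\partial \Loss}{\partial E_{kj}} = \sum_{t=1}^{n}\sum_{m=1}^{d} \frac{\partial \Loss}{\partial Y_{tm}}\,\frac{\partial Y_{tm}}{\partial E_{kj}} .
\]
We have $Y_{tm} = \sum_{l=1}^{V} X_{tl} E_{lm}$, which is linear in $\bE$. Hence $\partial Y_{tm}/\partial E_{kj} = X_{tk}\,\delta_{mj}$. Substituting collapses the sum over $m$ and leaves
\[
  \frac{\partial \Loss}{\partial E_{kj}} = \sum_{t=1}^{n} X_{tk}\, G_{tj} = (\bX^\top \bG)_{kj}.
\]
This is the first claim, $\partial\Loss/\partial\bE = \bX^\top\bG$.

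Second, I use the one-hot structure of $\bX$ from Corollary~\ref{cor:batch}. Row $t$ of $\bX$ is $\be_{i_t}^\top$, so $X_{tk} = \delta_{i_t k}$. Row $k$ of $\bX^\top\bG$ is therefore
\[
  \sum_{t=1}^{n} \delta_{i_t k}\, \bG_t = \sum_{t:\, i_t = k} \bG_t .
\]
When no $t$ has $i_t = k$, every term vanishes and the row is the empty sum, zero. Repeated identifiers each contribute their own $\bG_t$, so accumulation is automatic. This is exactly the scatter-add described above.

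No step is a real obstacle. The only point needing care is the convention that the gradient with respect to $\bE$ is indexed like $\bE$, i.e.\ is $V\times d$. With that convention the transpose on $\bX$ lands correctly, which I would verify by checking dimensions: $\bX^\top$ is $V\times n$ and $\bG$ is $n\times d$.
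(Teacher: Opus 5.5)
Your proposal is correct and follows essentially the same route as the paper's proof: an entrywise chain rule giving $(\bX^{\top}\bG)_{kj}$, then $X_{tk} = \delta_{i_t k}$ to collapse row $k$ to $\sum_{t : i_t = k} \bG_t$, which is the scatter-add. The one difference is that you keep the sum over the output column $m$ and collapse it with $\delta_{mj}$, whereas the paper writes only the surviving terms $Y_{tj}$ from the outset; yours is the more careful version of the same computation.
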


Proposition~\ref{prop:grad} coincides with the scatter-add gradient of the lookup implementation.

\begin{proof}\label{prf:grad}
    Since $Y_{tj} = \sum_k X_{tk} E_{kj}$, the chain rule gives
    \[
      \frac{\partial \Loss}{\partial E_{kj}}
      = \sum_{t=1}^{n} \frac{\partial \Loss}{\partial Y_{tj}} \frac{\partial Y_{tj}}{\partial E_{kj}}
      = \sum_{t=1}^{n} G_{tj} X_{tk}
      = (\bX^{\top} \bG)_{kj}.
    \]
    Because $X_{tk} = \delta_{i_t k}$, the sum over $t$ retains exactly the indices with $i_t = k$, so row $k$ of $\bX^{\top} \bG$ is $\sum_{t : i_t = k} \bG_t$. The scatter-add produces the same row by construction: it deposits $\bG_t$ into row $i_t$ for each $t$, and leaves the remaining rows at zero.
\end{proof}

Under Corollary~\ref{cor:orient}, the gradient with respect to $\bW = \bE^{\top}$ is $(\bX^{\top} \bG)^{\top} = \bG^{\top} \bX$; the identification is a transpose, and the row-selection structure is unchanged. Training moves only the columns of the geometry, indexed by tokens observed in the batch, so any property of the trained geometry is a property of the training distribution and the objective. The linear-layer backward materializes $\bX^{\top} \bG \in \R^{V \times d}$ densely, with, at most, $n$ nonzero rows, whereas the scatter-add sees only those rows; the free carrier is a mathematical object that implementations never materialize; Table~\ref{tab:cost} shows.

\begin{table}[ht]
    \centering
    \begin{tabular}{lll}
    \toprule
    \textbf{Quantity} & \textbf{One-hot times $\bE$} & \textbf{Lookup} \\
    \midrule
    Input storage & $\Theta(nV)$ & $\Theta(n)$ \\
    Forward cost & $\Theta(nVd)$ & $\Theta(nd)$ \\
    Gradient storage & $\Theta(Vd)$ dense & $\Theta(\min(n, V)\, d)$ sparse \\
    Parameters & $Vd$ & $Vd$ \\
    \bottomrule
    \end{tabular}
    \caption{Cost comparison for a batch of $n$ tokens.}
    \label{tab:cost}
\end{table}

\subsection{Identifier bias and linearity}\label{subsec:biasandlin}

Now, \cite{raschka-faq} states the identity for a bias-free layer. The restriction concerns the identification of parameters, and leaves the function class untouched.

Let $b \in \R^d$, and consider $x \mapsto x^{\top} \bE + b^{\top}$. On the input $\be_i$, this equals $\bE_i + b^{\top}$, which is row $i$ of $\bE' = \bE + \ones b^{\top}$ with $\ones \in \R^V$ the all-ones vector. A linear layer with bias, restricted to one-hot inputs, is, again, a lookup, with table $\bE'$; the set of functions on one-hot inputs realized with bias equals the set realized without, and the parameterization with bias has a $d$-dimensional redundancy, since $(\bE, b)$ and $(\bE + \ones c^{\top}, b - c)$ realize the same function for every $c \in \R^d$. We see, in Section~\ref{sec:affine}, the same redundancy, only from the side of the readout instead, where it becomes the all-ones row of the rank criterion.

$L_\bE$ is linear on $\R^V$ by construction. The composite $i \mapsto \ell_\bE(i)$ on the integers admits a linear extension only when $\bE_i = ic$ for a fixed $c \in \R^{1 \times d}$ and all $i$, which fails for generic $\bE$, as any $\bE$ with $\bE_2 \ne 2 \bE_1$ shows; and the encoding $i \mapsto \be_i$ is, itself, incompatible with integer addition, since $\be_1 + \be_2$ lies outside the set of one-hot vectors. Token identifiers are labels; in the vector logic, this is the statement that the free carrier encodes entities as atoms: every relation among is via the geometry; we see in Section~\ref{sec:rank} which relations a geometry must preserve.

\section{Regimes}\label{sec:regimes}

The extensional theorem is proved for the free carrier. We see in Section~\ref{sec:lookup} that the free carrier is the input to every embedding matrix; any other injection of $\mathcal{D}_e$ into a real vector space is a candidate carrier, and which of them admit the lifts that Theorem~\ref{thm:homo} guarantees for the free one is decided by the geometry matrix of Definition~\ref{def:geom}.

\begin{definition}[geometry matrix]\label{def:geom}
    Let $h : \mathcal{D}_e \to \R^d$ be any map; its geometry matrix is $\bH \in \R^{d \times V}$, with columns $\bH \be_i = h(d_i)$, such that $h = \bH \circ h_e$, where $h_e$ is the free carrier. 
\end{definition}

The free carrier itself has $\bH = I_V$; a trained embedding has $\bH = \bE^{\top}$. We call $h$ free when the columns of $\bH$ are linearly independent, and compressed otherwise.

Every map out of $\mathcal{D}_e$ into a vector space factors through the free carrier in this way, which is the universal property that justifies the name in the first place; the geometry matrix is the linear map through which it factors. Injectivity of $h$ is, technically, a weaker condition than freeness: distinct columns suffice for injectivity; a compressed geometry may well have distinct columns. 

\begin{figure}[h]
    \centering
    \begin{tikzpicture}[x=1cm]
    \draw[-{Latex[length=2mm]}] (0,0) -- (12.5,0) node[right] {$d$};
    \foreach \x/\l in {2.5/{$\rk[\bT;\ones]$}, 8.5/{$V$}} {
      \draw (\x,-0.12) -- (\x,0.12);
      \node[below=2pt] at (\x,-0.1) {\l};
    }
    \draw[decorate, decoration={brace, amplitude=5pt}] (0.2,0.35) -- (2.4,0.35) node[midway, above=6pt, align=center] {\footnotesize no exact lift\\[-2pt]\footnotesize of whole lexicon};
    \draw[decorate, decoration={brace, amplitude=5pt}] (2.6,0.35) -- (8.4,0.35) node[midway, above=6pt, align=center] {\footnotesize compressed regime\\[-2pt]\footnotesize exact $\Leftrightarrow$ $\rs(\bH) \supseteq \rs[\bT;\ones]$};
    \draw[decorate, decoration={brace, amplitude=5pt}] (8.6,0.35) -- (12.2,0.35) node[midway, above=6pt, align=center] {\footnotesize free regime\\[-2pt]\footnotesize every function lifts};
    \node[below=14pt, font=\footnotesize] at (5.5,0) {trained embeddings: $d \ll V$};
    \end{tikzpicture}
    \caption{The dimension axis for a geometry $\bH \in \R^{d \times V}$ over a lexicon with truth matrix $\bT$. Below $\rk[\bT;\ones]$, no geometry carries the whole lexicon; at and above $V$ with independent columns, every semantic function lifts; between them, exact lift is the row-space condition of Theorem~\ref{thm:rank}, and is where the geometries of Section~\ref{sec:measure} are. One instance of each regime is drawn in Figure~\ref{fig:bothregimes}.}
    \label{fig:regimes}
\end{figure}

\subsection{Free regime}\label{subsec:free}

\begin{proposition}[reparameterization]\label{prop:equiv}
    Let $\{h_\tau\}$ be the free carrier with lifts $\{L_f\}$, and, for each type, let $T_\tau$ be an injective linear map on $\mathcal{S}_{\mathcal{D}_\tau}$. Set $h'_\tau = T_\tau \circ h_\tau$. Then, for every semantic function $f : \mathcal{D}_\sigma \to \mathcal{D}_\tau$ with $\sigma$ primitive, there is a linear $L'_f$, with $L'_f \circ h'_\sigma = h'_\tau \circ f$, and composition along primitive types is preserved. 
\end{proposition}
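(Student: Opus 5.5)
The plan is to conjugate the free lift by the reparameterizations. Since $T_\tau$ is injective, it has a linear left inverse $T_\tau^{+}$ with $T_\tau^{+} T_\tau = \mathrm{id}$. On finite-dimensional carriers such an inverse exists by choosing a complement of $\IM(T_\tau)$. Given a semantic function $f : \mathcal{D}_\sigma \to \mathcal{D}_\tau$ with $\sigma$ primitive, Theorem~\ref{thm:homo} supplies the lift $L_f$ with $L_f \circ h_\sigma = h_\tau \circ f$. I would define
\[
  L'_f = T_\tau \circ L_f \circ T_\sigma^{+}.
\]
This is linear as a composite of linear maps. The verification is then a single line:
\[
  L'_f \circ h'_\sigma = T_\tau L_f T_\sigma^{+} T_\sigma h_\sigma = T_\tau L_f h_\sigma = T_\tau h_\tau f = h'_\tau \circ f .
\]

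Because $\sigma$ is primitive, the free carrier and the operator carrier coincide, so $h_\sigma$ lands in the space on which $T_\sigma$ acts and the composite is well typed. For $n$-ary functions I would do the same thing with $T_{\sigma_1}^{+} \otimes \cdots \otimes T_{\sigma_n}^{+}$ applied to the multilinear lift; this step is routine.

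For composition, take $f : \mathcal{D}_\sigma \to \mathcal{D}_\rho$ and $g : \mathcal{D}_\rho \to \mathcal{D}_\tau$ with $\rho$ primitive. I would show $L'_{g \circ f}$ and $L'_g \circ L'_f$ agree where it matters. Computing,
\[
  L'_g L'_f = T_\tau L_g T_\rho^{+} T_\rho L_f T_\sigma^{+} = T_\tau L_g L_f T_\sigma^{+} = T_\tau L_{g\circ f} T_\sigma^{+} = L'_{g\circ f}.
\]
The last equality but one uses composition of lifts along primitive intermediate types, which Theorem~\ref{thm:homo} provides.

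The main obstacle is that $L'_f$ is not unique off $\IM(T_\sigma)$, since it depends on the chosen left inverse. So "composition is preserved" should be stated on the images $T_\sigma(\operatorname{span} h_\sigma(\mathcal{D}_\sigma))$. There, any two choices agree, because they are fixed by their values on the spanning set $h'_\sigma(\mathcal{D}_\sigma)$. I would phrase the conclusion that way, or else fix one left inverse per type once and for all, which gives the identity above outright. If a carrier is infinite-dimensional, the left inverse still exists given a basis complement (choice), but I expect the paper only needs the finite case.
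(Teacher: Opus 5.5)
Your proposal is correct and matches the paper's proof essentially line for line. The paper also sets $L'_f = T_\tau L_f T_\sigma^{-}$ with a linear left inverse of $T_\sigma$, verifies the square by cancelling $T_\sigma^{-}T_\sigma$, and obtains composition from $L_g L_f = L_{g\circ f}$ in Theorem~\ref{thm:homo}; your remark about fixing one left inverse per type makes explicit a choice the paper leaves implicit.
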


In particular, every free geometry of Definition~\ref{def:geom} admits lifts for every semantic function.

\begin{proof}\label{prf:equiv}
    Since $T_\sigma$ is injective, it has a linear left inverse $T_\sigma^{-}$ with $T_\sigma^{-} T_\sigma = I$ on $\mathcal{S}_{\mathcal{D}_\sigma}$. Define $L'_f = T_\tau L_f T_\sigma^{-}$. For $a \in \mathcal{D}_\sigma$,
    \[
      L'_f\, h'_\sigma(a) = T_\tau L_f T_\sigma^{-} T_\sigma h_\sigma(a) = T_\tau L_f h_\sigma(a) = T_\tau h_\tau(f(a)) = h'_\tau(f(a)).
    \]
    For composition, if $g : \mathcal{D}_\tau \to \mathcal{D}_\rho$ with $\tau$ primitive, then 
    \[L'_g L'_f = T_\rho L_g T_\tau^{-} T_\tau L_f T_\sigma^{-} = T_\rho L_g L_f T_\sigma^{-} = T_\rho L_{g \circ f} T_\sigma^{-} = L'_{g \circ f},\] 
    using Theorem~\ref{thm:homo} for $L_g L_f = L_{g \circ f}$. Finally, a free geometry $\bH \in \R^{d \times V}$ has linearly independent columns, hence is an injective linear map $\R^V \to \R^d$; take $T_e = \bH$ and $T_\tau = I$ for the other types.
\end{proof}

Here, we see that the orthonormal basis of the free carrier is a convenience: any linearly independent family of entity vectors will do. Every function lifts here, so every measurement on a free geometry passes the homomorphism conditions. The function-type spaces are left as they were, so a predicate still lives in $\Hom(\R^V, \R^2)$, while entities live in $\R^d$; the readout picture of Section~\ref{sec:rank}, in which a predicate over a geometry $\bH$ is a linear map $L_P : \R^d \to \R^2$, is the specialization $L_P = \bM_P \bH^{-}$, with $\bH^{-}$ a left inverse of $\bH$.

\subsection{Compressed regime}\label{subsec:compressed}

A trained embedding matrix has $\bE^{\top} \in \R^{d \times V}$ with $d$ in the hundreds (or low thousands), and $V$ in the tens of thousands, so $\rk \bE^{\top} \le d < V$, and the geometry is compressed by a wide margin. Proposition~\ref{prop:equiv} requires injective $T$, and, therefore, leaves it be. A criterion on $\ker \bH$ replaces the proposition, which is the space of linear dependences among the entity vectors: a vector $\bc \in \R^V$ lies in $\ker \bH$ exactly when $\sum_i c_i\, h(d_i) = 0$. We must show that lifts exist exactly when these dependences lie in the kernel of the lexicon's augmented truth matrix.

\section{Rank criterion for monadic predicates}\label{sec:rank}

Fix a geometry $\bH \in \R^{d \times V}$ and a finite lexicon $\Lex \subseteq \mathcal{D}_{\langle e, t \rangle}$ of monadic predicates. For $P \in \Lex$, write $\bt_P \in \{0,1\}^{1 \times V}$ for its truth row, $(\bt_P)_i = P(d_i)$, and let $\bT \in \{0,1\}^{|\Lex| \times V}$ be the truth matrix with rows $\bt_P$. Write $[\bT; \ones]$ for $\bT$ augmented by the all-ones row. Row spaces $\rs(\cdot)$ are subspaces of $\R^{1 \times V}$.

\subsection{Exact lifts}\label{subsec:exactlifts}

An exact lift of $P$ over $\bH$ is a linear map $L_P : \R^d \to \R^2$ with $L_P\, h(d_i) = h_t(P(d_i))$ for every $i$, the truth basis $\bb_1, \bb_0$ being shared across the lexicon, such that the connective matrices of Section~\ref{sec:vl} act on outputs unchanged. In matrix form, $L_P \in \R^{2 \times d}$, and the condition is
\begin{equation}\label{eq:lift}
  L_P\, \bH = \bM_P,
\end{equation}
with $\bM_P$ the predicate matrix of the free carrier. Over the free carrier, $\bH = I_V$ and $L_P = \bM_P$ solves \eqref{eq:lift}; over a compressed geometry, \eqref{eq:lift} is a linear system in the unknown $L_P$ that may well fail to be solvable.

\begin{figure}[h]
    \centering
    \begin{tikzcd}[row sep=large, column sep=huge]
    \mathcal{D}_e \arrow[r, "h_e"] \arrow[d, "P"'] & \R^V \arrow[r, "\bH"] \arrow[d, "\bM_P"] & \R^d \arrow[dl, "L_P", dashed] \\
    \{0,1\} \arrow[r, "h_t"'] & \R^2 &
    \end{tikzcd}
    \caption{The left square commutes for every predicate by Theorem~\ref{thm:homo}, with $\bM_P = \bb_1 \bt_P + \bb_0 (\ones - \bt_P)$ the free readout; the dashed map exists when $\bM_P$ factors through $\bH$, which is condition \eqref{eq:lift}. When $\bH$ is free (injective), the factorization is $L_P = \bM_P \bH^{-}$; when $\bH$ is compressed, it exists only if $\ker \bH \subseteq \ker \bM_P$, which is a rank condition as Theorem~\ref{thm:rank}.}
    \label{fig:lift}
\end{figure}

\begin{lemma}\label{lem:rowspace}
    For every $P$, $\rs(\bM_P) = \operatorname{span}\{\bt_P, \ones\}$.
\end{lemma}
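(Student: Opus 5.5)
The plan is to compute the two rows of $\bM_P$ explicitly and then check both inclusions between the row space and $\operatorname{span}\{\bt_P, \ones\}$. By construction, the $i$-th column of $\bM_P$ is $h_t(P(d_i))$. This column equals $\bb_1 = (1,0)^\top$ when $P(d_i)=1$ and $\bb_0 = (0,1)^\top$ when $P(d_i)=0$. So the first row of $\bM_P$ has $i$-th entry $P(d_i) = (\bt_P)_i$, and the second row has $i$-th entry $1 - P(d_i)$. Hence
\[
  \bM_P = \begin{bmatrix} \bt_P \\ \ones - \bt_P \end{bmatrix},
\]
which is exactly the decomposition $\bM_P = \bb_1 \bt_P + \bb_0(\ones - \bt_P)$ recorded in Figure~\ref{fig:lift}. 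Here $\ones$ denotes the all-ones row in $\R^{1\times V}$.

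For the inclusion $\subseteq$, the row space is by definition the span of the two rows. Both rows, $\bt_P$ and $\ones - \bt_P$, are linear combinations of $\bt_P$ and $\ones$, so $\rs(\bM_P) \subseteq \operatorname{span}\{\bt_P,\ones\}$.

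For the reverse inclusion $\supseteq$, the vector $\bt_P$ is itself the first row. The vector $\ones$ is the sum of the two rows, $\bt_P + (\ones - \bt_P)$. Both generators therefore lie in $\rs(\bM_P)$, and so does their span.

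There is no real obstacle. The only point needing care is that the equality holds as subspaces regardless of dimension. In the degenerate cases where $P$ is constantly true or constantly false, $\bt_P$ equals $\ones$ or $0$, and both sides collapse to $\operatorname{span}\{\ones\}$. The argument above never assumes independence, so these cases need no separate treatment, though I would remark on them since they are what make the rank of $[\bT;\ones]$ drop in the later Theorem~\ref{thm:rank}.
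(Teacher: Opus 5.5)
Your proof is correct and follows the paper's own argument. Like the paper, you read off the rows of $\bM_P$ as $\bt_P$ and $\ones - \bt_P$ from the columns $\bb_1, \bb_0$, and then observe that $\{\bt_P, \ones - \bt_P\}$ and $\{\bt_P, \ones\}$ have the same span; your explicit two-inclusion check and your note on constant predicates only spell out what the paper states in one line.
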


\begin{proof}
    The rows of $\bM_P$ are $\bt_P$ and $\ones - \bt_P$, since column $i$ is $\bb_1$ when $P(d_i) = 1$ and $\bb_0$ otherwise. The spans of $\{\bt_P, \ones - \bt_P\}$ and $\{\bt_P, \ones\}$ coincide.
\end{proof}

\begin{theorem}[rank criterion]\label{thm:rank}
    The following are equivalent.
    \begin{enumerate}
      \item Every $P \in \Lex$ has an exact lift over $\bH$.
      \item $\rs(\bH) \supseteq \rs[\bT; \ones]$.
      \item $\ker \bH \subseteq \ker[\bT; \ones]$; that is, every linear dependence $\sum_i c_i\, h(d_i) = 0$ among the entity vectors satisfies $\sum_i c_i P(d_i) = 0$ for every $P \in \Lex$ and $\sum_i c_i = 0$.
    \end{enumerate}
\end{theorem}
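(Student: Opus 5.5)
The proof reduces the lifting problem \eqref{eq:lift} to a standard row-space fact, then translates it twice. The basic fact is this: for matrices $\bM \in \R^{m \times V}$ and $\bH \in \R^{d \times V}$, a solution $L \in \R^{m \times d}$ of $L\bH = \bM$ exists if and only if $\rs(\bM) \subseteq \rs(\bH)$. Each row of $L\bH$ is a linear combination of rows of $\bH$, which gives necessity. For sufficiency, write each row of $\bM$ as such a combination and collect the coefficients into the rows of $L$.

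First I show (1)$\Leftrightarrow$(2). Applying the fact to $\bM = \bM_P$, an exact lift of $P$ exists exactly when $\rs(\bM_P) \subseteq \rs(\bH)$. By Lemma~\ref{lem:rowspace}, this says $\operatorname{span}\{\bt_P, \ones\} \subseteq \rs(\bH)$. Taking this over all $P \in \Lex$, the union of these spans lies in $\rs(\bH)$ exactly when their sum lies in it, since $\rs(\bH)$ is a subspace. That sum is $\operatorname{span}(\{\bt_P : P \in \Lex\} \cup \{\ones\}) = \rs[\bT;\ones]$. If $\Lex$ is empty, condition (1) is vacuous while $\rs[\bT;\ones]$ is still $\operatorname{span}\{\ones\}$. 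I will therefore assume $\Lex$ is nonempty, as the statement implicitly does, or note this edge case briefly.

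Next I show (2)$\Leftrightarrow$(3), using orthogonal complements in $\R^{1\times V}$ with the standard inner product. For any matrix $\bA$ with $V$ columns, $\ker \bA = \rs(\bA)^{\perp}$ once we identify column and row vectors. Since $\perp$ reverses inclusion and $(W^\perp)^\perp = W$ in finite dimension, $\rs[\bT;\ones] \subseteq \rs(\bH)$ holds if and only if $\ker \bH \subseteq \ker[\bT;\ones]$. Unpacking the right-hand kernel condition gives the stated dependence form. A vector $\bc$ lies in $\ker \bH$ exactly when $\sum_i c_i h(d_i) = 0$, because the columns of $\bH$ are the $h(d_i)$. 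The vector $\bc$ lies in $\ker[\bT;\ones]$ exactly when $\bt_P \bc = \sum_i c_i P(d_i) = 0$ for every $P \in \Lex$ and $\ones\bc = \sum_i c_i = 0$.

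No step is a real obstacle; the argument is elementary linear algebra. The only point needing care is the passage from one lift per predicate to a single row-space inclusion for the whole lexicon. The truth basis is shared across predicates, but each $P$ gets its own $L_P$, so the conditions combine as a sum of subspaces. The all-ones row must be included, which is exactly what Lemma~\ref{lem:rowspace} supplies: the second row $\ones - \bt_P$ of $\bM_P$ forces $\ones \in \rs(\bH)$ whenever some predicate lifts.
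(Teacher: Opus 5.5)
Your proposal is correct and follows the paper's own route: solvability of $L\bH = \bM$ is row-space inclusion, Lemma~\ref{lem:rowspace} reduces this to $\bt_P, \ones \in \rs(\bH)$ for each $P$, and (2)$\Leftrightarrow$(3) is the duality $\ker = \rs^{\perp}$. The paper also writes out the explicit lift $L_P = \bb_1\bw_P + \bb_0\bw'_P$ and a direct (1)$\Rightarrow$(3) computation, neither of which you need. Your empty-lexicon caveat is genuine and the paper does not address it: with $\Lex = \emptyset$, (1) holds vacuously while (2) and (3) still require $\ones \in \rs(\bH)$, so the theorem needs $\Lex \neq \emptyset$.
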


\begin{proof}[Proof sketch]\label{prf:rank}
    The equation $L \bH = \bM$ has a solution $L$ if and only if every row of $\bM$ is a linear combination of the rows of $\bH$, that is, $\rs(\bM) \subseteq \rs(\bH)$. By Lemma~\ref{lem:rowspace}, \eqref{eq:lift} is solvable for $P$ if and only if $\bt_P, \ones \in \rs(\bH)$, and solvability for all of $\Lex$ is $\rs[\bT; \ones] \subseteq \rs(\bH)$, which is the equivalence of (1) and (2). 
    
    For (2) and (3), the row space and kernel of a matrix are orthogonal complements in $\R^V$, so $\rs(\bH) \supseteq \rs[\bT; \ones]$ if and only if $\ker \bH \subseteq \ker [\bT; \ones]$; and $\bc \in \ker[\bT; \ones]$ to $\bt_P \bc = 0$ for each $P$ and $\ones \bc = 0$.
    
    From (2) to (1) also admits a direct construction, and the direction from (1) to (3) a direct computation. Given (2), choose row vectors $\bw_P, \bw'_P \in \R^{1 \times d}$ with $\bw_P \bH = \bt_P$ and $\bw'_P \bH = \ones - \bt_P$, and set $L_P = \bb_1 \bw_P + \bb_0 \bw'_P$, so that $L_P x = (\bw_P x)\, \bb_1 + (\bw'_P x)\, \bb_0$. Then $L_P h(d_i) = P(d_i)\, \bb_1 + (1 - P(d_i))\, \bb_0 = h_t(P(d_i))$. Conversely, given a lift $L_P$ and $\bc \in \ker \bH$, apply $L_P$ to $\sum_i c_i\, h(d_i) = 0$ and expand: $\sum_i c_i \bigl(P(d_i)\, \bb_1 + (1 - P(d_i))\, \bb_0\bigr) = 0$, and independence of $\bb_1, \bb_0$ forces $\sum_i c_i P(d_i) = 0$ and $\sum_i c_i (1 - P(d_i)) = 0$, whose sum is $\sum_i c_i = 0$.
\end{proof}

\subsection{Corollaries}\label{subsec:corollaries}

Here we take a brief tour of some nice properties of the lifts.

\begin{corollary}[minimal dimension]\label{cor:mindim}
    The least $d$ for which some $\bH \in \R^{d \times V}$ carries exact lifts of all of $\Lex$ is $\rk[\bT; \ones]$, attained by any $\bH$, whose rows form a basis of $\rs[\bT; \ones]$.
\end{corollary}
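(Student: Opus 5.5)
The plan is to derive both halves of Corollary~\ref{cor:mindim} from Theorem~\ref{thm:rank}, using the equivalence of (1) and (2) throughout. Write $r = \rk[\bT;\ones]$, which is the dimension of $\rs[\bT;\ones]$ as a subspace of $\R^{1\times V}$. I will show a lower bound, namely that no $\bH$ with $d < r$ carries the lexicon, and then exhibit a matching construction with $d = r$.

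\textbf{Lower bound.} Suppose $\bH \in \R^{d \times V}$ carries exact lifts of every $P \in \Lex$. By Theorem~\ref{thm:rank}, (1) implies (2), so $\rs[\bT;\ones] \subseteq \rs(\bH)$. Taking dimensions gives
\[
r \le \dim \rs(\bH) = \rk \bH \le d,
\]
since $\bH$ has only $d$ rows. Hence every carrying geometry has $d \ge r$.

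\textbf{Attainment.} Choose a basis $\bw_1, \dots, \bw_r$ of $\rs[\bT;\ones]$. Such a basis exists and has exactly $r$ elements, for instance $r$ linearly independent rows drawn from among the $\bt_P$ and $\ones$. Let $\bH \in \R^{r \times V}$ be the matrix with these rows. Then $\rs(\bH) = \rs[\bT;\ones]$, so condition (2) holds with equality, and Theorem~\ref{thm:rank} gives exact lifts of all of $\Lex$. The explicit lift comes from the direct construction in its proof: $L_P = \bb_1 \bw_P + \bb_0 \bw'_P$, where $\bw_P$ and $\bw'_P$ are the coordinate vectors of $\bt_P$ and $\ones - \bt_P$ in this basis. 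The same argument covers any $\bH$ whose rows form a basis of $\rs[\bT;\ones]$, which is the ``any $\bH$'' clause of the statement.

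\textbf{Degenerate case.} There is no real obstacle; the only point to check is that $r \ge 1$, so that $d = r$ is an admissible dimension under the standing assumption $d \ge 1$. This holds because $\ones \ne 0$ for $V \ge 1$, so the augmented matrix always has rank at least one.
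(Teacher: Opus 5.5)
Your proposal is correct and follows the paper's own argument: the lower bound comes from (1)$\Rightarrow$(2) of Theorem~\ref{thm:rank} together with $\dim \rs[\bT;\ones] \le \rk \bH \le d$, and attainment comes from taking the rows of $\bH$ to be a basis of $\rs[\bT;\ones]$ and applying (2)$\Rightarrow$(1). The explicit coordinate-vector lift and your check that $r \ge 1$ are harmless additions the paper leaves implicit.
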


\begin{proof}[Proof sketch]\label{prf:mindim}
    By (2) in Theorem~\ref{thm:rank}, $\rs(\bH)$ must contain a subspace of dimension $\rk[\bT; \ones]$, so $d \ge \rk \bH \ge \rk[\bT; \ones]$; taking the rows of $\bH$ to be a basis of $\rs[\bT; \ones]$ gives equality.
\end{proof}

\begin{corollary}[closure forces the free regime]\label{cor:closure}
    If $\Lex = \mathcal{D}_{\langle e, t \rangle}$, then exact lifts of all of $\Lex$ exist only over free geometries; in particular, $d \ge V$.
\end{corollary}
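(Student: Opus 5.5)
The plan is to run Theorem~\ref{thm:rank} with the full lexicon and show that the row space it forces is all of $\R^{1 \times V}$. When $\Lex = \mathcal{D}_{\langle e, t \rangle}$, the truth matrix $\bT$ has as rows every vector in $\{0,1\}^{1 \times V}$, one for each function $\mathcal{D}_e \to \{0,1\}$. In particular, for each $i$ the singleton predicate $P_i$, true exactly of $d_i$, has truth row $\bt_{P_i} = \be_i^\top$. So $\rs[\bT; \ones]$ contains every standard basis row vector, and therefore $\rs[\bT;\ones] = \R^{1 \times V}$ and $\rk[\bT;\ones] = V$.

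Next, suppose every $P \in \Lex$ has an exact lift over some $\bH \in \R^{d \times V}$. The implication (1)$\Rightarrow$(2) of Theorem~\ref{thm:rank} gives $\rs(\bH) \supseteq \R^{1 \times V}$, so $\rk \bH = V$. A $d \times V$ matrix of rank $V$ has linearly independent columns, which by the definition following Definition~\ref{def:geom} means $h$ is free. The same rank equality gives $d \ge \rk \bH = V$. Equivalently, condition (3) reads $\ker \bH \subseteq \ker[\bT;\ones] = \{0\}$, which says directly that the entity vectors admit no nontrivial dependence. The dimension bound could also be read off Corollary~\ref{cor:mindim}, since $\rk[\bT;\ones] = V$.

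For completeness, I would note the converse: every free geometry does carry all the lifts, by Proposition~\ref{prop:equiv} or by (2)$\Rightarrow$(1). This shows the free regime is attained and not merely necessary, although the statement only asks for the "only over free geometries" direction.

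There is no real obstacle. The only step needing care is identifying the singleton predicates inside $\mathcal{D}_{\langle e, t \rangle}$, which uses $V \ge 1$ and that $\mathcal{D}_{\langle e, t \rangle}$ is the full function space $\{0,1\}^{\mathcal{D}_e}$. Under the paper's convention that $\bT$ has one row per predicate, this is automatic. The all-ones row is not even needed here, since the singletons alone already span.
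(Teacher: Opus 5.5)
Your proof is correct and follows the paper's own argument: the singleton predicates give $\rs[\bT;\ones] = \R^{1\times V}$, and condition (2) of Theorem~\ref{thm:rank} then forces $\rk\bH = V$, which means the columns are independent and $d \ge V$. The kernel reformulation via (3) and the remark on the converse are harmless additions.
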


\begin{proof}[Proof sketch]\label{prf:closure}
    $\mathcal{D}_{\langle e, t \rangle}$ contains the singleton predicates $P_i$ with $\bt_{P_i} = \be_i^{\top}$, so $\rs[\bT; \ones] = \R^{1 \times V}$, and (2) in \Cref{thm:rank} forces $\rk \bH = V$, which is linear independence of the $V$ columns.
\end{proof}

The extensional theorem embeds every domain $\mathcal{D}_{\langle e, t \rangle}$ in full; Corollary~\ref{cor:closure} requires a free geometry for it. Linear independence of the entity vectors is, thereby, derived from predicate closure, and the one-hot construction is one coordinate choice among the free geometries, which are its injective linear images by Definition~\ref{def:geom}, each admitting every lift by Proposition~\ref{prop:equiv}; the equal pairwise distances of the one-hot basis are a feature of that choice alone. The theorem embeds an object, the full function space $\mathcal{D}_{\langle e, t \rangle}$, that lies beyond what any learned system represents; the results that follow concern the compressed regime, where closure fails by construction, and the conditions have content.

\begin{corollary}[compressibility and the lexicon]\label{cor:dual}
    A dependence $\bc \in \R^V$ among entity vectors is admissible, in the sense that some geometry with $\bc \in \ker \bH$ carries exact lifts of $\Lex$, if and only if $\bc \perp \rs[\bT; \ones]$; the admissible dependences form the orthogonal complement of the augmented row space of the lexicon, of dimension $V - \rk[\bT; \ones]$.
\end{corollary}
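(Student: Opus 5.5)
The plan is to reduce both directions to condition (3) of Theorem~\ref{thm:rank} and then settle the dimension count with the orthogonal-complement identity already used in that proof.

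\emph{Necessity.} Suppose some geometry $\bH$ with $\bc \in \ker \bH$ carries exact lifts of all of $\Lex$. Then (1)$\Rightarrow$(3) of Theorem~\ref{thm:rank} gives $\ker \bH \subseteq \ker[\bT;\ones]$. Hence $\bc \in \ker[\bT;\ones]$, that is, $\bt_P \bc = 0$ for every $P$ and $\ones\bc = 0$, which says exactly that $\bc \perp \rs[\bT;\ones]$.

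\emph{Sufficiency.} Given $\bc \perp \rs[\bT;\ones]$, I will exhibit a witness geometry. Take the geometry of Corollary~\ref{cor:mindim}: let the rows of $\bH$ form a basis of $\rs[\bT;\ones]$. Then $\rs(\bH) = \rs[\bT;\ones]$, so condition (2) of Theorem~\ref{thm:rank} holds and every $P\in\Lex$ has an exact lift. Moreover
\[
\ker\bH = \rs(\bH)^\perp = \rs[\bT;\ones]^\perp \ni \bc .
\]
The only point needing care is the degenerate case $\bc = 0$, which is trivially admissible (take, e.g., $\bH = I_V$). It is also worth noting that the same $\bH$ works for every admissible $\bc$ at once, since its kernel is the full complement. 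I expect no step to be a real obstacle; the only subtle point is choosing the witness so that its kernel is as large as possible, and the minimal geometry does exactly that.

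\emph{Description and dimension.} The two directions together identify the set of admissible dependences with $\rs[\bT;\ones]^\perp = \ker[\bT;\ones]$, so it is a subspace of $\R^V$. Rank--nullity applied to $[\bT;\ones]$ gives its dimension as $V - \rk[\bT;\ones]$.
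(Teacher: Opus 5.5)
Your proposal is correct and follows the paper's argument: necessity comes from condition (3) of Theorem~\ref{thm:rank}, sufficiency from taking $\bH$ with row space exactly $\rs[\bT;\ones]$ (so that $\ker\bH = \rs[\bT;\ones]^{\perp}$), and the dimension from rank--nullity. The separate treatment of $\bc = 0$ is unnecessary, because the minimal geometry already has $0$ in its kernel.
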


\begin{proof}\label{prf:dual}
    Immediate from (3) in Theorem~\ref{thm:rank}, taking $\bH$ with row space exactly $\rs[\bT; \ones]$ for the converse.
\end{proof}

A geometry may well identify entity vectors up to a dependence exactly when every predicate in the lexicon, and the constant predicate, assign that dependence weight zero. The dimension $d$ of a learned embedding bounds its rank; whether $d$ is compatible with exactness depends on the lexicon, which lexicalizes a minuscule fraction of the $2^V$ available predicates, and Corollary~\ref{cor:mindim} computes the exchange rate between rank and lexicon.

\begin{corollary}[indiscernibility]\label{cor:leibniz}
    Let $\bH$ have $\rs(\bH) = \rs[\bT; \ones]$. Then $h(d_i) = h(d_j)$ if and only if $P(d_i) = P(d_j)$ for every $P \in \Lex$. Hence, a minimal exact geometry is injective on $\mathcal{D}_e$ if and only if $\Lex$ separates entities.
\end{corollary}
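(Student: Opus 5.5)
The plan is to reduce the statement to a kernel computation. The vector $\bc = \be_i - \be_j \in \R^V$ satisfies $\bH \bc = h(d_i) - h(d_j)$, so $h(d_i) = h(d_j)$ exactly when $\be_i - \be_j \in \ker \bH$. I would then use the hypothesis $\rs(\bH) = \rs[\bT;\ones]$ together with the orthogonal-complement fact already invoked in the proof of Theorem~\ref{thm:rank}: the kernel of a matrix is the orthogonal complement of its row space in $\R^V$. Equal row spaces therefore give equal kernels, $\ker \bH = \ker[\bT;\ones]$. This equality of kernels, and not only the inclusion of item (3) of Theorem~\ref{thm:rank}, is what the equality hypothesis supplies, and it is what makes both directions work.

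The next step is to read off membership of $\be_i - \be_j$ in $\ker[\bT;\ones]$. The all-ones row gives $\ones(\be_i - \be_j) = 1 - 1 = 0$ automatically. Each truth row gives $\bt_P(\be_i - \be_j) = P(d_i) - P(d_j)$. So $\be_i - \be_j \in \ker[\bT;\ones]$ if and only if $P(d_i) = P(d_j)$ for every $P \in \Lex$. Chaining this with the previous step proves the biconditional. The forward direction (equal vectors imply indiscernibility) in fact needs only the inclusion $\ker\bH \subseteq \ker[\bT;\ones]$, so it holds for any exact geometry. The converse needs the reverse inclusion, which is where minimality, i.e.\ row-space equality, is used.

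For the ``hence'' clause, I would first note that by Corollary~\ref{cor:mindim} a minimal exact geometry has rows forming a basis of $\rs[\bT;\ones]$, so $\rs(\bH) = \rs[\bT;\ones]$ and the first part applies. Then $h$ is injective if and only if no pair $i \ne j$ has $h(d_i) = h(d_j)$. By the first part, that holds if and only if no pair $i \ne j$ agrees on every predicate, which is the definition of $\Lex$ separating entities.

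I do not expect a real obstacle here. The only care point is noticing that the constant row contributes nothing for differences of basis vectors, and interpreting ``minimal exact geometry'' as one with row space exactly $\rs[\bT;\ones]$. That reading is forced: a minimal exact geometry has $d = \rk[\bT;\ones]$ and row space containing $\rs[\bT;\ones]$, so the dimensions force equality.
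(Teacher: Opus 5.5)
Your proposal is correct and follows the paper's proof essentially verbatim: $h(d_i)=h(d_j)$ if and only if $\be_i-\be_j\in\ker\bH=\rs[\bT;\ones]^{\perp}$, where the $\ones$ row is satisfied automatically and the truth rows give $P(d_i)=P(d_j)$. You also add two correct points the paper leaves implicit: the dimension count showing that a minimal exact geometry must have row space exactly $\rs[\bT;\ones]$, and the observation that the forward direction needs only the inclusion of Theorem~\ref{thm:rank}(3).
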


\begin{proof}[Proof sketch]\label{prf:leibniz}
    $h(d_i) = h(d_j)$ if and only if $\be_i - \be_j \in \ker \bH = \rs[\bT; \ones]^{\perp}$, if and only if $\bt_P(\be_i - \be_j) = 0$ for all $P$ (the condition $\ones(\be_i - \be_j) = 0$ holding automatically), which is $P(d_i) = P(d_j)$ for all $P$.
\end{proof}

\begin{remark}[canonical geometry]\label{rem:canonical}
    The operator carrier of the predicate type is, itself, a compressed geometry. Let the $2^V$ predicates of $\mathcal{D}_{\langle e, t \rangle}$ play the role of entities, and Montague's $V$ individuals $\lambda P.\,P(d_i)$ the role of a lexicon, with truth rows $\bt_{d_i}(P) = P(d_i)$. The comparison map $\varepsilon : \R^{2^V} \to \Hom(\R^V, \R^2)$ of \cite{quigley2026intensional}, which sends the basis vector of $P$ to $\bM_P$, has, as its matrix, the $2V \times 2^V$ array, whose rows are $\bt_{d_i}$ and $\ones - \bt_{d_i}$, so $\rs(\varepsilon) = \rs[\bT_{\mathrm{ind}}; \ones]$, of rank $V + 1$. By Corollary~\ref{cor:mindim}, $\varepsilon$ is a minimal exact geometry for the lexicon of individuals, and, by Theorem~\ref{thm:rank}, a Boolean-valued functional of predicates lifts over it when its truth row lies in the affine span of the coordinate functions on the cube $\{0,1\}^V$, which, for Boolean values, means a constant, a coordinate $P \mapsto P(d)$, or a negated coordinate. On operator encodings, Montague's individuals and their negations are the only nonconstant linear functionals of predicates, and every determiner over a restrictor of two or more elements, together with every modal or attitude operator over two or more indices, lies outside the linear regime. Corollary~\ref{cor:leibniz} says of $\varepsilon$ that it identifies two predicates when they agree on every individual, so that $\varepsilon$ is injective on predicates while its kernel as a linear map is the parallelogram space of Proposition~\ref{prop:cube}, of dimension $2^V - V - 1$.
\end{remark}

\subsection{Affine readout}\label{sec:affine}

We see the all-ones row in Theorem~\ref{thm:rank} through the shared truth basis: the second row of $\bM_P$ is $\ones - \bt_P$; a linear $L_P$ must produce it. If $L_P$ is permitted to be affine of the form $L_P x = A x + \bc_P$, then the requirement changes.

\begin{proposition}[affine lifts]\label{prop:affine}
    Affine exact lifts of all of $\Lex$ over $\bH$ exist if and only if $\rs(\bT) \subseteq \rs(\bH) + \operatorname{span}\{\ones\}$; equivalently, if and only if linear exact lifts exist over the homogenized geometry $\bH^{+} = [\bH; \ones] \in \R^{(d+1) \times V}$.
\end{proposition}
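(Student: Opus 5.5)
We reduce the affine problem to the linear one by homogenization, and then read off both characterizations from Theorem~\ref{thm:rank}. An affine map $x \mapsto A_P x + \bc_P$ with $A_P \in \R^{2 \times d}$ and $\bc_P \in \R^2$ is the same thing as a linear map $[A_P \;\; \bc_P] \in \R^{2 \times (d+1)}$ applied to $\begin{bmatrix} x \\ 1 \end{bmatrix}$. Evaluated at the entity vectors $h(d_i)$, this gives
\[
  A_P \bH + \bc_P \ones = [A_P \;\; \bc_P]\, \bH^{+}.
\]
Hence the affine lift condition $A_P \bH + \bc_P \ones = \bM_P$ holds if and only if $[A_P \;\; \bc_P]$ is an exact linear lift of $P$ over $\bH^{+}$ in the sense of \eqref{eq:lift}. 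The correspondence $(A_P, \bc_P) \leftrightarrow [A_P\;\;\bc_P]$ is a bijection. So affine exact lifts over $\bH$ exist for all $P \in \Lex$ if and only if linear exact lifts exist over $\bH^{+}$, which is the second equivalence.

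For the row-space form, we apply Theorem~\ref{thm:rank}, (1)$\Leftrightarrow$(2), to $\bH^{+}$. Linear lifts over $\bH^{+}$ exist if and only if $\rs[\bT;\ones] \subseteq \rs(\bH^{+})$. Since the rows of $\bH^{+}$ are those of $\bH$ together with $\ones$, we have $\rs(\bH^{+}) = \rs(\bH) + \operatorname{span}\{\ones\}$. This subspace contains $\ones$ automatically, so the condition on the augmented row is vacuous. The containment therefore reduces to $\rs(\bT) \subseteq \rs(\bH) + \operatorname{span}\{\ones\}$, as claimed.

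As a sanity check, and to avoid relying only on the reduction, we can also run the direction from the row-space condition to lifts directly, in the style of the proof of Theorem~\ref{thm:rank}. We write $\bt_P = \bw_P \bH + \alpha_P \ones$ and set
\[
  A_P = \bb_1 \bw_P - \bb_0 \bw_P, \qquad \bc_P = \alpha_P \bb_1 + (1 - \alpha_P)\bb_0 .
\]
The second row of the output is then $-\bw_P \bH + (1-\alpha_P)\ones = \ones - \bt_P$, which matches $\bM_P$.

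The only step that needs care is the identity $\rs(\bH^{+}) = \rs(\bH) + \operatorname{span}\{\ones\}$, together with the observation that the $\ones$ row of the criterion is then free. Both are immediate, so I expect no real obstacle. The main point to state clearly is that the constant term $\bc_P$ may vary with $P$ while the truth basis stays shared, and that the bijection between affine and homogenized linear maps respects this.
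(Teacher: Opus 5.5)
Your proposal is correct and follows the paper's own argument. Homogenizing via $\bH^{+} = [\bH;\ones]$ makes affine lifts over $\bH$ correspond bijectively to linear lifts over $\bH^{+}$; Theorem~\ref{thm:rank} applied to $\bH^{+}$ then yields the criterion, with $\ones \in \rs(\bH^{+}) = \rs(\bH) + \operatorname{span}\{\ones\}$ making the all-ones requirement automatic. Your direct construction of $A_P$ and $\bc_P$ from $\bt_P = \bw_P \bH + \alpha_P \ones$ checks out, and it is an explicit verification that the paper leaves implicit.
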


\begin{proof}[Proof sketch]\label{prf:affine}
    An affine map on $\R^d$ is a linear map on $\R^{d+1}$, restricted to the affine hyperplane of vectors with last coordinate $1$, and $h^{+}(d_i) = (h(d_i), 1)$ has geometry matrix $\bH^{+}$; the equivalence of affine lifts over $\bH$ and linear lifts over $\bH^{+}$ is this identification. Theorem~\ref{thm:rank}, applied to $\bH^{+}$, requires $\rs[\bT; \ones] \subseteq \rs(\bH^{+}) = \rs(\bH) + \operatorname{span}\{\ones\}$, in which the $\ones$ requirement is automatic, leaving $\rs(\bT) \subseteq \rs(\bH) + \operatorname{span}\{\ones\}$.
\end{proof}

\begin{corollary}[minimal affine dimension]\label{cor:affmindim}
    Write $\bT_c = \bT - \bar{\bt}\ones^{\top}$ for the truth matrix with its row means removed. Then $\rk \bT_c = \rk[\bT;\ones] - 1$, and the least $d$ for which some $\bH \in \R^{d \times V}$ carries affine exact lifts of all of $\Lex$ is $\rk[\bT;\ones] - 1$, attained by any $\bH$ whose rows form a basis of $\rs(\bT_c)$.
\end{corollary}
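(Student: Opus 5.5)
The plan has three parts: prove the rank identity for $\bT_c$, derive the lower bound from Proposition~\ref{prop:affine}, and check that rows spanning $\rs(\bT_c)$ attain it.

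\textbf{Rank identity.} Each row of $\bT_c$ is $\bt_P - \bar t_P \ones$, so $\rs(\bT_c) \subseteq \rs[\bT;\ones]$. Conversely, $\bt_P = (\bt_P - \bar t_P\ones) + \bar t_P\ones$, which gives $\rs(\bT_c) + \operatorname{span}\{\ones\} = \rs[\bT;\ones]$.

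It remains to show $\ones \notin \rs(\bT_c)$. Every row of $\bT_c$ has zero sum, since $\bar t_P$ is the row mean. Hence $\rs(\bT_c) \subseteq \ones^{\perp}$, while $\ones\ones^\top = V \neq 0$. The sum $\rs(\bT_c) + \operatorname{span}\{\ones\}$ is therefore direct, and $\rk \bT_c = \rk[\bT;\ones]-1$. I expect this orthogonality observation to be the only step with real content.

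\textbf{Lower bound.} Suppose $\bH \in \R^{d\times V}$ carries affine exact lifts of all of $\Lex$. By Proposition~\ref{prop:affine}, $\rs(\bT) \subseteq \rs(\bH)+\operatorname{span}\{\ones\}$, so $\rs[\bT;\ones] \subseteq \rs(\bH)+\operatorname{span}\{\ones\}$. Taking dimensions,
\[
\rk[\bT;\ones] \le \rk\bH + 1 \le d+1,
\]
so $d \ge \rk[\bT;\ones]-1$. Equivalently, apply Corollary~\ref{cor:mindim} to the homogenized geometry $\bH^{+}$, which has $d+1$ rows.

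\textbf{Attainment.} Let the rows of $\bH$ be a basis of $\rs(\bT_c)$, so $d = \rk\bT_c = \rk[\bT;\ones]-1$. By the identity above, $\rs(\bH)+\operatorname{span}\{\ones\} = \rs[\bT;\ones] \supseteq \rs(\bT)$, and Proposition~\ref{prop:affine} gives affine exact lifts.

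\textbf{Degenerate case.} If $\rk[\bT;\ones]=1$, every predicate in $\Lex$ is constant. Then $d=0$ works, with the lifts being constant affine maps, and the convention $\R^{0\times V}$ should be noted.
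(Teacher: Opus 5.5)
Your proof is correct and follows the paper's argument essentially step for step. The zero-row-sum observation gives $\rs(\bT_c)\subseteq\ones^{\perp}$ and hence the direct sum $\rs[\bT;\ones]=\rs(\bT_c)\oplus\operatorname{span}\{\ones\}$; Proposition~\ref{prop:affine} then bounds $\rk[\bT;\ones]\le d+1$, with equality attained when $\rs(\bH)=\rs(\bT_c)$. Your remark on the degenerate case $\rk[\bT;\ones]=1$, where every predicate is constant and $d=0$, is a small point the paper leaves implicit, and it is worth keeping since Section~\ref{sec:lookup} assumes $d\ge 1$.
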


\begin{proof}\label{prf:affmindim}
    Each row of $\bT_c$ sums to zero, so $\rs(\bT_c) \subseteq \ones^{\perp}$ and $\ones \notin \rs(\bT_c)$, while $\rs[\bT;\ones] = \rs(\bT_c) + \operatorname{span}\{\ones\}$, giving the rank identity. By Proposition~\ref{prop:affine}, affine exact lifts exist if and only if $\rs(\bT) \subseteq \rs(\bH) + \operatorname{span}\{\ones\}$, equivalently $\rs(\bT_c) \subseteq \rs(\bH) + \operatorname{span}\{\ones\}$. The right side has dimension at most $d + 1$ and must contain $\rs(\bT_c) + \operatorname{span}\{\ones\}$ of dimension $\rk[\bT;\ones]$, whence $d \ge \rk[\bT;\ones] - 1$, with equality when $\rs(\bH) = \rs(\bT_c)$.
\end{proof}

Appending a constant coordinate converts affine readouts into linear ones; the bias supplies the constant row required by the shared truth basis. Over the free carrier, linear and affine readouts realize the same predicate extensions, since $\ones\in\rs(I_V)$. We retain the linear formulation as primary, and obtain affine readouts by homogenization. Once either readout returns the truth basis, the Boolean connective matrices act unchanged.

\subsection{Connectives and sentences}\label{subsec:connectivesandsentences}

Once every predicate in $\Lex$ lifts exactly, so does every sentence built from atomic predications by truth-functional connectives, with the same connective matrices as over the free carrier.

\begin{proposition}\label{prop:sentences}
    Suppose every $P \in \Lex$ has an exact lift $L_P$ over $\bH$. Then, for every Boolean combination $\Phi$ of atomic predications $P(d_i)$, the vector computed by applying $\bM_c$ to tensor products of lifted outputs equals $h_t(\den{\Phi})$.
\end{proposition}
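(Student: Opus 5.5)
The argument is a structural induction on the Boolean combination $\Phi$. It runs entirely inside the truth space, since exactness of the atomic lifts delivers outputs already in $\IM(h_t) = \{\bb_1, \bb_0\}$. First, fix the evaluation procedure precisely. An atomic predication $P(d_i)$ is evaluated as $L_P\, h(d_i) = L_P \bH \be_i$. An $n$-ary connective node $c(\Phi_1, \dots, \Phi_n)$ is evaluated as $\bM_c\bigl(v_1 \otimes \cdots \otimes v_n\bigr)$, where $v_k$ is the vector computed for the subformula $\Phi_k$. The claim is that the vector computed for $\Phi$ equals $h_t(\den{\Phi})$.

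For the base case, $\Phi = P(d_i)$ with $P \in \Lex$. By the lift equation \eqref{eq:lift}, $L_P \bH = \bM_P$, so
\[
  L_P \bH \be_i = \bM_P \be_i = h_t(P(d_i)) = h_t(\den{P(d_i)}),
\]
using the column description of $\bM_P$ from Section~\ref{subsec:extensional}. This is the only place the geometry $\bH$ enters; from here on the compressed geometry is invisible.

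For the inductive step, let $\Phi = c(\Phi_1, \dots, \Phi_n)$. By hypothesis each $v_k = h_t(\den{\Phi_k})$, which is $\bb_1$ or $\bb_0$, so $v_1 \otimes \cdots \otimes v_n$ is one of the $2^n$ tensor basis vectors $\bigotimes_k h_t(t_k)$. The connective matrix $\bM_c$ is defined, per Section~\ref{subsec:extensional} following \cite{Mizraji1992,Westphal2005}, by sending $\bigotimes_k h_t(t_k)$ to $h_t(c(t_1, \dots, t_n))$; equivalently, it is the multilinear lift of the truth function $c$ guaranteed by Theorem~\ref{thm:homo}, evaluated on free encodings, since $t$ is primitive and so the free and operator carriers coincide there. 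Hence
\[
  \bM_c(v_1 \otimes \cdots \otimes v_n) = h_t\bigl(c(\den{\Phi_1}, \dots, \den{\Phi_n})\bigr) = h_t(\den{\Phi}),
\]
the last step being the compositional clause for connectives in $\mathcal{M}_{ext}$. For $\neg$ and $\wedge$ this can be checked directly against the displayed matrices.

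The main obstacle is not computational but a matter of stating exactly what the inductive invariant is. The invariant is that every intermediate vector lies in $\IM(h_t)$, not merely somewhere in $\R^2$. This is what lets $\bM_c$ be applied only to basis tensors, where its action is defined by the truth table. It is also where exactness is indispensable: a merely separating score would produce vectors off $\{\bb_1, \bb_0\}$, and multilinearity would then mix table entries. I would state this invariant explicitly. I would also note that repeated atoms, such as $P(d_i) \wedge P(d_i)$, need no special handling, since the tensor is taken of values rather than of independent inputs.
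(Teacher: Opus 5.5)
Your proposal is correct and follows the paper's own proof. Both argue by structural induction on $\Phi$: the base case is the exact-lift equation $L_P\bH\be_i = \bM_P\be_i = h_t(P(d_i))$, and the inductive step is the defining action of $\bM_c$ on tensor basis vectors $h_t(t_1)\otimes\cdots\otimes h_t(t_n)$, with your explicit invariant (every intermediate vector lies in $\{\bb_1,\bb_0\}$) being what the paper's inductive step uses implicitly.
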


\begin{proof}\label{prf:sentences}
    We proceed by induction on $\Phi$. For the simple case: $L_P h(d_i) = h_t(P(d_i))$ by hypothesis. Inductive case: if the immediate subformulas evaluate to $h_t(t_1), \dots, h_t(t_n)$, then $\bM_c(h_t(t_1) \otimes h_t(t_2)\otimes\cdots \otimes h_t(t_n)) = h_t(c(t_1, \dots, t_n))$ by the definition of $\bM_c$ over the truth space, which is unchanged by compression of the entity space.
\end{proof}

Compression of the entity carrier is, therefore, confined to the leaves of a derivation; the connective level of the vector logic is insensitive to it. This is a first indication of where the compressed regime places its constraints: on the geometry of atoms and their readouts; everything from the truth space upward is unchanged. The compound predicates themselves need no lifts of their own, and in general have none: given the $\ones$ row, the exact lexicon is closed under negation, since $\ones - \bt_P \in \rs(\bH)$, while $\bt_P \odot \bt_Q$, the truth row of $P \wedge Q$ as a monadic predicate, may lie outside $\rs(\bH)$, as it does for the two predicates of Example~\ref{ex:running} at $V = 4$, where the conjunction is the singleton $\{d_1\}$ that Figure~\ref{fig:bothregimes} excludes. Proposition~\ref{prop:sentences} evaluates $P(d_i) \wedge Q(d_i)$ exactly all the same, because the tensor product of the two readouts is quadratic in $h(d_i)$, and the connective matrix acts on that product.

\begin{example}\label{ex:running}
    Take $\mathcal{D}_e = \{\dan, \tom\}$, so $V = 2$, and the lexicon $\Lex = \{\mathrm{write}, \mathrm{published}\}$ with $\bt_{\mathrm{write}} = (1, 0)$ and $\bt_{\mathrm{published}} = (0, 0)$, so that at the world of evaluation Daniel writes and neither is published. Then $[\bT; \ones]$ has rows $(1,0), (0,0), (1,1)$ and rank $2 = V$; by Corollary~\ref{cor:mindim}, the minimal exact dimension is $2$, so this lexicon already forces the free regime on two entities. Extend to $V = 4$, with $\bt_{\mathrm{write}} = (1,1,0,0)$ and $\bt_{\mathrm{published}} = (1,0,1,0)$: now $\rk[\bT; \ones] = 3 < 4$, exact compression to $d = 3$ exists, and \Cref{sec:parallel} identifies the one admissible dependence.
\end{example}

\begin{figure}[ht]
    \centering
    \begin{tikzpicture}[x=1.5cm, y=1.3cm, >={Latex[length=1.8mm]}, baseline=(current bounding box.south)]
    \draw[black!50] (-0.35,1) -- (1.55,1);
    \node[right, font=\footnotesize, black!60] at (1.55,1) {height $1$};
    \draw[->, black!40] (0,0) -- (1.75,0) node[below, font=\footnotesize, black] {write};
    \draw[->, black!40] (0,0) -- (0,1.55);
    \draw[->, thick] (0,0) -- (1,1);
    \draw[->, thick] (0,0) -- (0,1);
    \fill (1,1) circle (1.6pt) node[above, font=\footnotesize] {$h(\dan)$};
    \fill (0,1) circle (1.6pt) node[above left, font=\footnotesize] {$h(\tom)$};
    \node[below, font=\footnotesize] at (0.85,-0.45) {free, $V = 2$};
    \end{tikzpicture}
    \hspace{3em}
    \begin{tikzpicture}[x={(2.0cm,0cm)}, y={(1.15cm,0.62cm)}, z={(0cm,1.7cm)}, >={Latex[length=1.8mm]}, baseline=(current bounding box.south)]
    \fill[black!6] (-0.18,-0.18,1) -- (1.30,-0.18,1) -- (1.30,1.30,1) -- (-0.18,1.30,1) -- cycle;
    \draw[black!50] (-0.18,-0.18,1) -- (1.30,-0.18,1) -- (1.30,1.30,1) -- (-0.18,1.30,1) -- cycle;
    \draw[->, black!40] (0,0,0) -- (1.75,0,0) node[below, font=\footnotesize, black] {write};
    \draw[->, black!40] (0,0,0) -- (0,1.8,0) node[right=1pt, font=\footnotesize, black] {published};
    \draw[->, black!40] (0,0,0) -- (0,0,1.5);
    \draw[->, thin, black!60] (0,0,0) -- (1,1,1);
    \draw[->, thin, black!60] (0,0,0) -- (1,0,1);
    \draw[->, thin, black!60] (0,0,0) -- (0,1,1);
    \draw[->, thin, black!60] (0,0,0) -- (0,0,1);
    \draw[thick] (0,0,1) -- (1,0,1) -- (1,1,1) -- (0,1,1) -- cycle;
    \foreach \p/\lab/\pos in {{(1,1,1)}/{$h(d_1)$}/{above right}, {(1,0,1)}/{$h(d_2)$}/{right}, {(0,1,1)}/{$h(d_3)$}/{above left}} {
      \fill \p circle (1.6pt);
      \node[\pos, font=\footnotesize, fill=white, inner sep=1pt] at \p {\lab};
    }
    \fill (0,0,1) circle (1.6pt);
    \node[left=3pt, font=\footnotesize, fill=white, inner sep=1pt] at (0,0,1) {$h(d_4)$};
    \node[below, font=\footnotesize] at (0.85,0.55,-0.62) {compressed, $V = 4$, $d = 3$};
    \end{tikzpicture}
    \caption{Both regimes on the lexicon of Example~\ref{ex:running}, in minimal exact geometries, whose rows are truth rows and $\ones$, the zero row of $\mathrm{published}$ at $V = 2$ dropped; each coordinate reads off a predicate (or the constant); every entity vector reaches the affine slice at height one. Left: $V = 2$: $\rk[\bT;\ones] = 2 = V$, the lexicon forces the free regime, and the two entity vectors are independent; the sole dependence $\sum_i c_i\, h(d_i) = 0$ is $c = 0$. Right: $V = 4$: $\rk[\bT;\ones] = 3$, and the four entity vectors satisfy the single admissible dependence $h(d_1) - h(d_2) - h(d_3) + h(d_4) = 0$, a parallelogram as in Section~\ref{sec:parallel}. Every truth row of the lexicon annihilates it, so $\bw_{\mathrm{write}} = (1,0,0)$ and $\bw_{\mathrm{published}} = (0,1,0)$ are exact; the singleton $\{d_1\}$ assigns it weight $1$, and (3) of Theorem~\ref{thm:rank} excludes its exact lift over this geometry.}
    \label{fig:bothregimes}
\end{figure}

\subsection{Determiners}\label{subsec:determiners}

Once the atomic lexicon lifts exactly, quantification over entities computes in the compressed space as well. A determiner meaning that is conservative, extension-invariant, and isomorphism-invariant depends on its restrictor $P$ and scope $Q$ only through the pair $(|P \setminus Q|, |P \cap Q|)$ \cite{vanBenthem1986,KeenanStavi1986}, the tree of numbers, so that \textit{every} is $|P \setminus Q| = 0$, \textit{some} is $|P \cap Q| \ge 1$, \textit{most} is $|P \cap Q| > |P \setminus Q|$, and \textit{at least $n$} is $|P \cap Q| \ge n$.

\begin{proposition}[compressed determiners]\label{prop:determiners}
    Suppose every $P \in \Lex$ has an exact lift over $\bH$, with readouts $\bw_P \bH = \bt_P$ and $\bw_\ones \bH = \ones$, and let $\bG = \bH \bH^{\top} \in \R^{d \times d}$. Then, for all $P, Q \in \Lex$,
    \[
      |P \cap Q| = \bw_P\, \bG\, \bw_Q^{\top}, \qquad |P \setminus Q| = \bw_P\, \bG\, (\bw_\ones - \bw_Q)^{\top},
    \]
    so every conservative, extension-invariant, isomorphism-invariant determiner evaluates on $P$ and $Q$ by two $d \times d$ bilinear accumulations followed by its decision on the tree of numbers.
\end{proposition}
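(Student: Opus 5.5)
The plan is to express each cardinality as an inner product of truth rows, and then push the readouts through $\bH$. Since truth rows are $0/1$ vectors, the elementwise product $\bt_P \odot \bt_Q$ is the indicator of $P \cap Q$. Its sum is therefore
\[
  |P \cap Q| = \sum_i (\bt_P)_i (\bt_Q)_i = \bt_P\, \bt_Q^{\top}.
\]
Likewise, $\ones - \bt_Q$ is the truth row of the complement of $Q$, because $(\ones-\bt_Q)_i = 1 - Q(d_i)$. This gives $|P \setminus Q| = \bt_P (\ones - \bt_Q)^{\top}$. The first step of the proof records these two identities, using only that truth values lie in $\{0,1\}$.

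The second step substitutes the readout equations $\bt_P = \bw_P \bH$, $\bt_Q = \bw_Q \bH$ and $\ones = \bw_\ones \bH$. These vectors exist by Theorem~\ref{thm:rank}: condition (2) places $\bt_P$ and $\ones$ in $\rs(\bH)$, which is exactly the construction in its proof. The substitution gives
\[
  \bt_P \bt_Q^{\top} = \bw_P \bH \bH^{\top} \bw_Q^{\top} = \bw_P \bG \bw_Q^{\top}.
\]
By linearity, $\ones - \bt_Q = (\bw_\ones - \bw_Q)\bH$, so the second identity becomes $\bw_P \bG (\bw_\ones - \bw_Q)^{\top}$. The readout vectors need not be unique when $\bH$ has dependent rows. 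This does not affect the argument: the identity holds for any choice satisfying the readout equations, because only the products $\bw\bH$ enter.

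The last step is the determiner claim. By the cited characterization \cite{vanBenthem1986,KeenanStavi1986}, such a determiner is a function of the pair $(|P\setminus Q|, |P\cap Q|)$ alone. Evaluating it therefore amounts to computing the two bilinear forms in $\bG$, which is a $d\times d$ matrix, and then applying the determiner's decision to the resulting pair of numbers.

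There is no substantive obstacle here. The only point needing care is the bookkeeping that lets the Boolean product be replaced by an ordinary inner product, which is where the $0/1$ values are used. A second point to note is that the $\ones$ readout $\bw_\ones$ is available only because of the augmented row in Theorem~\ref{thm:rank}. This is why the hypothesis includes $\bw_\ones\bH = \ones$ explicitly.
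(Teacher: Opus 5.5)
Your proposal is correct and matches the paper's proof. Both write $|P\cap Q| = \bt_P\bt_Q^{\top}$ and $|P\setminus Q| = \bt_P(\ones-\bt_Q)^{\top}$, substitute the readouts to get bilinear forms in $\bG = \bH\bH^{\top}$, and leave the determiner claim to the cited tree-of-numbers classification; your remark that non-unique readouts are harmless, because only the products $\bw\bH$ enter, is a correct detail the paper leaves implicit.
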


\begin{proof}\label{prf:determiners}
    $|P \cap Q| = \bt_P \bt_Q^{\top} = \bw_P \bH \bH^{\top} \bw_Q^{\top}$ and $|P \setminus Q| = \bt_P (\ones - \bt_Q)^{\top} = \bw_P \bH \bH^{\top} (\bw_\ones - \bw_Q)^{\top}$; the rest is the cited classification.
\end{proof}

The Gram matrix $\bG$ is formed once from the geometry and shared across the lexicon. The quantified sentence is a decision on bilinear forms in the readouts and, by the descent theorem of \cite{quigley2026intensional}, a linear readout of no single vector. The two accumulations have the form of the modal accumulation \eqref{eq:box} with the restrictor in place of the accessible set, which is the sense in which modals quantify over worlds \cite{Kratzer1991}; the compressed regime places one constraint on both, and leaves the connective and quantifier levels as they are. Beyond the exact regime, the counts inherit the defect of Section~\ref{sec:relax} through $\bt_P - \bw_P \bH$, and the decisions then act on approximate counts, which we leave to future work.

\section{Relations}\label{sec:rel}

A binary relation $R \subseteq \mathcal{D}_e \times \mathcal{D}_e $ is, at type $\langle e, \langle e, t \rangle \rangle$, a curried function, and the $\Hom$ construction of \cite{quigley2026intensional} lifts it to a linear map $\mathcal{S}_{\mathcal{D}_e} \to \Hom(\mathcal{S}_{\mathcal{D}_e}, \mathcal{S}_{\mathcal{D}_t})$, equivalently, a bilinear map $\mathcal{S}_{\mathcal{D}_e} \times \mathcal{S}_{\mathcal{D}_e} \to \mathcal{S}_{\mathcal{D}_t}$, equivalently, a linear map on the tensor square. Over a compressed geometry, the corresponding object is a linear $L_R : \R^d \otimes \R^d \to \R^2$ with
\begin{equation}\label{eq:rellift}
  L_R\, \bigl(h(d_i) \otimes h(d_j)\bigr) = h_t(R(d_i, d_j)) \quad \text{for all } i, j.
\end{equation}
Write $\bT_R \in \{0,1\}^{V \times V}$ for the truth matrix of $R$, $(\bT_R)_{ij} = R(d_i, d_j)$.

\subsection{Bilinear criterion}\label{subsec:bilinear}

\begin{theorem}[relations]\label{thm:rel}
    An exact lift \eqref{eq:rellift} of $R$ over $\bH$ exists if and only if $\ones \in \rs(\bH)$, and there is $\bA_R \in \R^{d \times d}$ with
    \[
      \bT_R = \bH^{\top} \bA_R\, \bH .
    \]
\end{theorem}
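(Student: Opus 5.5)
Identify $L_R \in \R^{2 \times d^2}$ with two rows, each a linear functional on $\R^d \otimes \R^d$, and each of these with a bilinear form, that is, a matrix $\bA \in \R^{d \times d}$ acting by $\bA \mapsto x^{\top} \bA\, y$ on $x \otimes y$. Writing $L_R$ as the pair $(\bA_1, \bA_0)$, condition \eqref{eq:rellift} reads, entrywise over all $(i,j)$,
\[
  h(d_i)^{\top} \bA_1\, h(d_j) = R(d_i,d_j), \qquad h(d_i)^{\top} \bA_0\, h(d_j) = 1 - R(d_i,d_j),
\]
that is, $\bH^{\top} \bA_1 \bH = \bT_R$ and $\bH^{\top} \bA_0 \bH = \ones^{\top}\ones - \bT_R$, where $\ones^{\top}\ones$ is the all-ones $V \times V$ matrix. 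This is the exact analogue of Lemma~\ref{lem:rowspace}: the lift exists if and only if both $\bT_R$ and $\ones^{\top}\ones - \bT_R$ lie in the set $\{\bH^{\top}\bA\bH\}$, which is a linear subspace, so equivalently if and only if $\bT_R$ and $\ones^{\top}\ones$ both do.

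Next I reduce the second condition to $\ones \in \rs(\bH)$. If $\ones = \bw\bH$, take $\bA = \bw^{\top}\bw$ to get $\bH^{\top}\bA\bH = \ones^{\top}\ones$. Conversely, suppose $\ones^{\top}\ones = \bH^{\top}\bA\bH$ and take $\bc \in \ker\bH$. Then $\bc^{\top}\ones^{\top}\ones\,\bc = (\ones\bc)^2$ must vanish, because $\bH\bc = 0$ kills the right-hand side. Hence $\ones\bc = 0$, so $\ker\bH \subseteq \ker\ones$, which by the orthogonal-complement argument in the proof of Theorem~\ref{thm:rank} gives $\ones \in \rs(\bH)$. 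Combining this with the first paragraph, the lift exists if and only if $\ones \in \rs(\bH)$ and $\bT_R = \bH^{\top}\bA_R\bH$ for some $\bA_R$. For the forward direction one takes $\bA_R = \bA_1$. For the converse one sets $\bA_1 = \bA_R$ and $\bA_0 = \bw^{\top}\bw - \bA_R$, and reassembles $L_R = \bb_1 \vecop(\bA_1)^{\top} + \bb_0 \vecop(\bA_0)^{\top}$ under the standard identification of $\R^d\otimes\R^d$ with $\R^{d^2}$.

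The main obstacle is the converse step extracting $\ones \in \rs(\bH)$ from representability of the all-ones matrix. Only the sum of the two rows of $L_R$ is forced to give $\ones^{\top}\ones$, and the argument must pass from a rank-one quadratic identity back to a linear membership. The kernel trick above handles this: evaluate the quadratic form at $\bc \in \ker\bH$. It should be checked that it does not secretly need symmetry of $\bA$; it does not, since $\bc^{\top}\bH^{\top}\bA\bH\bc = 0$ for any $\bA$. The remaining bookkeeping is the identification of linear maps on the tensor square with bilinear forms, and it is routine.
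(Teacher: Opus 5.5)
Your proof is correct and follows essentially the paper's route. Both reduce the lift to membership of $\bT_R$ and the all-ones matrix in the subspace $\{\bH^{\top}\bA\bH\}$: you do this by parametrizing the two rows of $L_R$ directly as bilinear forms, the paper via $\rs(\bH\otimes\bH)=\rs(\bH)\otimes\rs(\bH)$. Both then settle the all-ones condition with the factorization $\bw^{\top}\bw$ in one direction. In the other direction you evaluate the quadratic form on $\ker\bH$, where the paper observes that the rank-one all-ones matrix must have its column space inside $\cs(\bH^{\top})$; these are the same orthogonal-complement fact seen from two sides.
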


\begin{proof}[Proof sketch]\label{prf:rel}
    Bookkeeping is an irritant and laborious here. We proceed cautiously.

    The tensor square $h(d_i) \otimes h(d_j)$ is column $(i,j)$ of $\bH \otimes \bH \in \R^{d^2 \times V^2}$, so \eqref{eq:rellift} is $L_R (\bH \otimes \bH) = \bM_R$, with $\bM_R \in \R^{2 \times V^2}$ having columns $h_t(R(d_i, d_j))$ in the same ordering of pairs $(i,j)$ as the Kronecker product; as in Lemma~\ref{lem:rowspace}, $\rs(\bM_R) = \operatorname{span}\{\vecop(\bT_R)^{\top}, \ones_{V^2}^{\top}\}$ with $\vecop$ taken in that ordering, and solvability is $\rs(\bM_R) \subseteq \rs(\bH \otimes \bH)$. The rows of $\bH \otimes \bH$ are the Kronecker products $\bH_k \otimes \bH_l$ of pairs of rows of $\bH$, so $\rs(\bH \otimes \bH) = \rs(\bH) \otimes \rs(\bH)$; under the identification of $\R^{1 \times V} \otimes \R^{1 \times V}$ with $V \times V$ matrices, $\rs(\bH) \otimes \rs(\bH) = \{\bH^{\top} \bA \bH : \bA \in \R^{d \times d}\}$, since $\rs(\bH) = \{\mathbf{a}^{\top} \bH\}$ and $(\mathbf{a}^{\top} \bH)^{\top} (\mathbf{a}'^{\top} \bH) = \bH^{\top} (\mathbf{a} \mathbf{a}'^{\top}) \bH$, with sums of such rank-one terms filling out all $\bA$. So $\vecop(\bT_R)^{\top} \in \rs(\bH \otimes \bH)$ is $\bT_R = \bH^{\top} \bA_R \bH$ for some $\bA_R$, and $\ones_{V^2}^{\top} = \ones_V^{\top} \otimes \ones_V^{\top} \in \rs(\bH) \otimes \rs(\bH)$ if and only if $\ones_V^{\top} \in \rs(\bH)$, since $\ones \ones^{\top} = \bH^{\top} \bA \bH$ requires the rank-one matrix $\ones\ones^{\top}$ to have column space inside $\cs(\bH^{\top}) = \rs(\bH)^{\top}$, and conversely $\ones_V^{\top} = \mathbf{a}^{\top} \bH$ gives $\ones\ones^{\top} = \bH^{\top} \mathbf{a}\, \mathbf{a}^{\top} \bH$.
\end{proof}

\begin{figure}[ht]
    \centering
    \begin{tikzpicture}[x=1cm, y=1cm, >={Latex[length=1.6mm]}]
    \fill[black!12] (0,2.5) rectangle (3.4,2.72);
    \fill[black!12] (2.3,0) rectangle (2.52,3.4);
    \fill[black!75] (2.3,2.5) rectangle (2.52,2.72);
    \fill[black!25] (4.6,2.5) rectangle (5.4,2.72);
    \fill[black!25] (9.3,1.3) rectangle (9.52,2.1);
    \draw (0,0) rectangle (3.4,3.4);
    \draw (4.6,0) rectangle (5.4,3.4);
    \draw (5.8,1.3) rectangle (6.6,2.1);
    \draw (7.0,1.3) rectangle (10.4,2.1);
    \node at (1.7,1.7) {$\bT_R$};
    \node at (4.0,1.7) {$=$};
    \node at (5.0,1.7) {$\bH^{\top}$};
    \node at (6.2,1.7) {$\bA_R$};
    \node at (8.7,1.7) {$\bH$};
    \node[below, font=\footnotesize] at (1.7,0) {$V \times V$};
    \node[below, font=\footnotesize] at (5.0,0) {$V \times d$};
    \node[below, font=\footnotesize] at (6.2,1.3) {$d \times d$};
    \node[below, font=\footnotesize] at (8.7,1.3) {$d \times V$};
    \node[above, font=\footnotesize] (ent) at (2.41,3.4) {$R(d_i,d_j)$};
    \draw[->, thin] (ent.south) -- (2.41,2.76);
    \node[above, font=\footnotesize] (hi) at (5.0,3.4) {$h(d_i)^{\top}$};
    \draw[->, thin] (hi.south) -- (5.0,2.76);
    \node[below, font=\footnotesize] (hj) at (9.41,0.75) {$h(d_j)$};
    \draw[->, thin] (hj.north) -- (9.41,1.26);
    \end{tikzpicture}
    \caption{Factorization $\bT_R = \bH^{\top} \bA_R\, \bH$, with $d \ll V$. The shaded row of $\bH^{\top}$ is $h(d_i)^{\top}$ and the shaded column of $\bH$ is $h(d_j)$; the light bands in $\bT_R$ are row $i$ and column $j$, and their crossing is the entry $(\bT_R)_{ij} = R(d_i, d_j)$, the readout $h(d_i)^{\top} \bA_R\, h(d_j)$. The inner dimension is $d$, giving $\rk \bT_R \le d$, the obstruction of Corollary~\ref{cor:relrank}; the second condition of the theorem, $\ones \in \rs(\bH)$, constrains $\bH$ alone.}
    \label{fig:rel}
\end{figure}

The lifted relation\footnote{This is the bilinear scoring function of relational embedding models such as RESCAL \cite{nickel2011three}, which Theorem~\ref{thm:rel} recovers as the exact form of a lifted binary relation; under the vector logic, the score has a truth-conditional reading.} is a bilinear form $\bA_R$ on the compressed space, and $R(d_i, d_j)$ is read off as $h(d_i)^{\top} \bA_R\, h(d_j)$. 

\subsection{Obstructions}\label{subsec:obstruction}

\begin{corollary}[rank obstruction]\label{cor:relrank}
    If $R$ lifts exactly over $\bH$, then $\rk \bT_R \le \rk \bH \le d$. In particular, the identity relation $\{(d_i, d_i)\}$, with $\bT_{=} = I_V$, lifts exactly only over free geometries.
\end{corollary}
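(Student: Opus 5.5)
The plan is to read the corollary directly off the factorization in Theorem~\ref{thm:rel}. If $R$ lifts exactly over $\bH$, that theorem supplies $\bA_R \in \R^{d \times d}$ with $\bT_R = \bH^{\top} \bA_R \bH$. The rank of a product is at most the rank of each factor, so
\[
  \rk \bT_R \le \rk(\bH^{\top} \bA_R \bH) \le \rk \bH .
\]
Since $\bH$ has $d$ rows, $\rk \bH \le d$, which gives the first inequality chain. We never need the condition $\ones \in \rs(\bH)$; only the bilinear factorization is used.

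For the identity relation, set $\bT_R = \bT_{=} = I_V$, which has rank $V$. An exact lift then forces $V \le \rk \bH \le V$, the upper bound holding because $\bH$ has $V$ columns. So $\rk \bH = V$, meaning the $V$ columns of $\bH$ are linearly independent. That is exactly the definition of a free geometry in Definition~\ref{def:geom}, and it also yields $d \ge V$.

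For completeness, I would note that the converse holds and the statement is sharp. Over a free $\bH$, take a left inverse $\bH^{-}$ and set $\bA_{=} = (\bH^{-})^{\top}\bH^{-}$. Then $\bH^{\top}\bA_{=}\bH = I_V$, and $\ones \in \rs(\bH) = \R^{1\times V}$, so Theorem~\ref{thm:rel} gives a lift. This direction is not required by the statement as written.

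There is no real obstacle. The only point that needs care is that the middle factor $\bA_R$ is $d \times d$, so the rank passes through $\bH$ and not through some larger space; the explicit form of Theorem~\ref{thm:rel} already guarantees this.
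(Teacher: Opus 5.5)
Your proof is correct and follows the paper's argument: the factorization $\bT_R = \bH^{\top}\bA_R\bH$ from Theorem~\ref{thm:rel} gives $\rk \bT_R \le \rk \bH \le d$, and $\rk I_V = V$ forces $\rk \bH = V$, which means the columns of $\bH$ are linearly independent. Your converse via $\bA_{=} = (\bH^{-})^{\top}\bH^{-}$ is also correct, though the statement does not require it.
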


\begin{proof}\label{prf:relrank}
    $\rk(\bH^{\top} \bA_R \bH) \le \rk \bH$; and $\rk I_V = V$ forces $\rk \bH = V$.
\end{proof}

The rank bound is a lower bound only; the factorization constrains both argument positions through the same geometry, and the exact value follows from the proof of Theorem~\ref{thm:rel}.

\begin{corollary}[minimal dimension for a relation]\label{cor:relmin}
    The least $d$ for which some $\bH \in \R^{d \times V}$ carries an exact lift of $R$ is $\dim \operatorname{span}\bigl(\{\ones\} \cup \rs(\bT_R) \cup \rs(\bT_R^{\top})\bigr)$, attained by any $\bH$ whose rows form a basis of that span.
\end{corollary}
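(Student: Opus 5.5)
Let $U = \operatorname{span}\bigl(\{\ones\} \cup \rs(\bT_R) \cup \rs(\bT_R^{\top})\bigr) \subseteq \R^{1 \times V}$. The proof reduces both directions to Theorem~\ref{thm:rel}, which characterizes exact lifts by two conditions: $\ones \in \rs(\bH)$ and $\bT_R = \bH^{\top}\bA_R\bH$ for some $\bA_R$.

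\textbf{Lower bound.} We show that $U \subseteq \rs(\bH)$ whenever $R$ lifts. Then $d \ge \rk\bH \ge \dim U$.

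The condition $\ones \in \rs(\bH)$ is given directly by the theorem. From $\bT_R = \bH^{\top}\bA_R\bH$, each row of $\bT_R$ has the form $\be_i^{\top}\bH^{\top}\bA_R\bH$. This is a row vector times $\bH$, so it lies in $\rs(\bH)$; hence $\rs(\bT_R) \subseteq \rs(\bH)$. Transposing gives $\bT_R^{\top} = \bH^{\top}\bA_R^{\top}\bH$, and the same argument yields $\rs(\bT_R^{\top}) \subseteq \rs(\bH)$. Since $\rs(\bH)$ is a subspace containing all three sets, it contains their span $U$.

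\textbf{Attainment.} Let $\bH$ have as rows a basis $u_1,\dots,u_k$ of $U$, where $k=\dim U$, so that $\rs(\bH)=U$ and $\ones \in \rs(\bH)$. It remains to produce $\bA_R$ with $\bT_R = \bH^{\top}\bA_R\bH$.

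We use the identification from the proof of Theorem~\ref{thm:rel}: $\{\bH^{\top}\bA\bH\} = \rs(\bH)\otimes\rs(\bH)$, viewed as $V\times V$ matrices. Equivalently, it is the set of matrices whose column space lies in $U^{\top}$ and whose row space lies in $U$. Call this the key step.

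To prove the key step, let $\bP$ be the orthogonal projector onto $U^{\top}\subseteq\R^V$, which can be written $\bP = \bH^{\top}(\bH\bH^{\top})^{-1}\bH$ because $\bH$ has independent rows. Then:
\begin{itemize}
  \item $\rs(\bT_R)\subseteq U$ gives $\bT_R\bP = \bT_R$;
  \item $\cs(\bT_R) = \rs(\bT_R^{\top})^{\top} \subseteq U^{\top}$ gives $\bP\bT_R = \bT_R$.
\end{itemize}
Hence $\bT_R = \bP\bT_R\bP = \bH^{\top}\bigl[(\bH\bH^{\top})^{-1}\bH\bT_R\bH^{\top}(\bH\bH^{\top})^{-1}\bigr]\bH$. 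This supplies $\bA_R$ explicitly, and Theorem~\ref{thm:rel} then gives an exact lift at $d=\dim U$.

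This is where the transpose row space genuinely enters. The column condition on $\bT_R$ is exactly what membership of $\rs(\bT_R^{\top})$ in $U$ secures, and it cannot be dropped for non-symmetric $R$.

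Finally, for arbitrary $d\ge \dim U$, the same bound persists, and any $\bH$ whose rows form a basis of $U$ attains it by the construction above. This establishes minimality together with the stated attainment.
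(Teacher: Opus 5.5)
Your proof is correct and follows the paper's route. Both use Theorem~\ref{thm:rel} to turn exact lift into containment of the stated span in $\rs(\bH)$, then read off $d \ge \rk\bH \ge$ its dimension, with equality when the rows of $\bH$ form a basis. The only difference is that the paper cites the identification of $\{\bH^{\top}\bA\bH\}$ with the matrices whose rows lie in $\rs(\bH)$ and whose columns lie in $\rs(\bH)^{\top}$, whereas you verify the needed direction directly with the projector $\bH^{\top}(\bH\bH^{\top})^{-1}\bH$, which also gives $\bA_R$ explicitly.
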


\begin{proof}\label{prf:relmin}
    By the identification in the proof of Theorem~\ref{thm:rel}, $\{\bH^{\top} \bA \bH : \bA \in \R^{d \times d}\}$ is the set of $V \times V$ matrices whose rows lie in $\rs(\bH)$ and whose columns lie in $\rs(\bH)^{\top}$, so $\bT_R = \bH^{\top} \bA_R \bH$ is solvable if and only if $\rs(\bT_R) \subseteq \rs(\bH)$ and $\rs(\bT_R^{\top}) \subseteq \rs(\bH)$. With the requirement $\ones \in \rs(\bH)$, exact lift is containment of the stated span in $\rs(\bH)$, whence $d \ge \rk \bH \ge \dim \operatorname{span}\bigl(\{\ones\} \cup \rs(\bT_R) \cup \rs(\bT_R^{\top})\bigr)$, with equality when the rows of $\bH$ are a basis of the span.
\end{proof}

A single monadic predicate has $\rk[\bt_P; \ones] \le 2$, and lifts exactly in dimension two, so forcing high dimension at type $\langle e, t \rangle$ requires a lexicon, and Corollary~\ref{cor:closure} uses the whole closed such one. A single binary relation can force the free regime by itself, and the relation that does so is identity, the denotation of the copula in \textit{Daniel is Daniel}. A strict total order does so as well, one dimension above its rank: $\bT_<$ is strictly upper triangular with ones above the diagonal, of rank $V - 1$, while $\rs(\bT_<) = \operatorname{span}\{\be_2^{\top}, \dots, \be_V^{\top}\}$ and $\rs(\bT_<^{\top}) = \operatorname{span}\{\be_1^{\top}, \dots, \be_{V-1}^{\top}\}$ together span $\R^{1 \times V}$ for $V \ge 2$, so Corollary~\ref{cor:relmin} returns $V$. Equivalence relations with $k$ classes have $\rk \bT_R = k$, and, since $\bT_R$ is symmetric with $\ones$ in its row space, compress to dimension $k$ exactly. The rank of a relation's truth matrix bounds the dimension of any exact carrier from below, in the same way that $\rk[\bT; \ones]$ bounds the dimension for a monadic lexicon, and Corollary~\ref{cor:relmin} gives the value.

\begin{remark}[higher arities]
    An $n$-ary relation imposes its condition on $\bH^{\otimes n}$, with $\rk(\bH^{\otimes n}) = (\rk \bH)^n$, and the lift is an $n$-linear form; the flattening ranks of the truth tensor bound $\rk \bH$ from below in the same way. Theorem~\ref{thm:rel} is the $n = 2$ case.
\end{remark}

\section{Index sorts}\label{sec:index}

The intensional layer places its free carrier on the index space, $h_S(s) = \be_s$; for a discrete sort with finitely many indices, the results of Sections~\ref{sec:rank} and \ref{sec:rel} apply to any compressed geometry $\bH_W \in \R^{d \times n}$ of the world sort with the same proofs\footnote{We defer them here, and encourage the ambitious reader to try them.}, once the objects are identified. Here, propositions play the role of monadic predicates: $\varphi$ has truth profile $\bv(\varphi)^{\top}$ as its truth row over $W$, and a lexicon of propositions $\Lex_W$ has truth matrix $\bT_W$. Accessibility plays the role of a binary relation, with truth matrix $\bA$.

\begin{proposition}[compressed worlds]\label{prop:worlds}
    Let $\bH_W \in \R^{d \times n}$ be a geometry of $W$.
    \begin{enumerate}
      \item Every $\varphi \in \Lex_W$ has an exact lift $L_\varphi \bH_W = \bP_\varphi$ if and only if $\rs(\bH_W) \supseteq \rs[\bT_W; \ones]$; the minimal exact dimension is $\rk[\bT_W; \ones]$.
      \item Accessibility lifts exactly as a bilinear form if and only if $\ones \in \rs(\bH_W)$ and $\bA = \bH_W^{\top} \widehat{\bA} \bH_W$ for some $\widehat{\bA} \in \R^{d \times d}$; in particular, $\rk \bA \le d$.
      \item Under (1) and (2) the modal accumulation of \eqref{eq:box} computes in the compressed space: with $\bv(\varphi) = (\bw_\varphi \bH_W)^{\top}$ and $\ones = (\bw_\ones \bH_W)^{\top}$ for the readouts of (1),
      \[
        \bA\, \bv(\varphi) = \bH_W^{\top}\, \widehat{\bA}\, \bH_W \bH_W^{\top}\, \bw_\varphi^{\top}, \qquad \bA\, (\ones - \bv(\varphi)) = \bH_W^{\top}\, \widehat{\bA}\, \bH_W \bH_W^{\top}\, (\bw_\ones - \bw_\varphi)^{\top},
      \]
      each a $d \times d$ computation followed by one expansion through $\bH_W^{\top}$, after which the decisions of \eqref{eq:box} apply unchanged.
    \end{enumerate}
\end{proposition}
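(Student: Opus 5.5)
The plan is to treat the world sort exactly as the entity sort was treated, so that parts (1) and (2) become direct instances of Theorem~\ref{thm:rank}, Corollary~\ref{cor:mindim}, Theorem~\ref{thm:rel} and Corollary~\ref{cor:relrank}. The substitution is: replace $\mathcal{D}_e$ by $W$, $V$ by $n$, $\bH$ by $\bH_W$, and the predicate matrix $\bM_P$ by the proposition matrix $\bP_\varphi$. Nothing in those proofs used anything about $\mathcal{D}_e$ beyond its being a finite set carried freely by $\be_i$. For (1), I would first record the analogue of Lemma~\ref{lem:rowspace}: the rows of $\bP_\varphi$ are $\bv(\varphi)^{\top}$ and $\ones - \bv(\varphi)^{\top}$, because column $s$ is $h_t$ of the truth value at $w_s$. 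Hence $\rs(\bP_\varphi) = \operatorname{span}\{\bv(\varphi)^{\top}, \ones\}$. The solvability argument for $L\bH_W = \bP_\varphi$ then goes through verbatim and yields the row-space criterion, and the dimension count of Corollary~\ref{cor:mindim} gives the minimal dimension $\rk[\bT_W;\ones]$.

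For (2), I would regard $\bA$ as the truth matrix of the binary relation $R_{\bA}(w_i,w_j) \iff \bA_{ij}=1$ on $W$. A bilinear exact lift means a linear $L$ on $\R^d \otimes \R^d$ with $L(h(w_i)\otimes h(w_j)) = h_t(\bA_{ij})$. Theorem~\ref{thm:rel} applies unchanged and gives the factorization together with the condition $\ones \in \rs(\bH_W)$. Corollary~\ref{cor:relrank} gives the bound, since $\rk \bA = \rk(\bH_W^{\top}\widehat{\bA}\bH_W) \le \rk \bH_W \le d$.

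For (3), the computation is direct substitution. Under (1), the readouts give $\bv(\varphi) = \bH_W^{\top}\bw_\varphi^{\top}$ and $\ones = \bH_W^{\top}\bw_\ones^{\top}$. Under (2), $\bA = \bH_W^{\top}\widehat{\bA}\bH_W$. Multiplying these yields
$\bA\,\bv(\varphi) = \bH_W^{\top}\widehat{\bA}\bH_W\bH_W^{\top}\bw_\varphi^{\top}$, and the second identity follows by linearity with $\ones - \bv(\varphi) = \bH_W^{\top}(\bw_\ones - \bw_\varphi)^{\top}$. The inner factor $\widehat{\bA}\,\bG_W\,\bw^{\top}$, with $\bG_W = \bH_W\bH_W^{\top}$, is a $d\times d$ computation on a $d$-vector, and $\bH_W^{\top}$ expands the result to $\R^n$. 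The decisions in \eqref{eq:box} are then applied to an $n$-vector that is literally equal to the one they were defined on, so they apply unchanged. The out-degree form of the box condition also still holds, because $\dg_\bA(w_i) = (\bA\ones)_i$ is computed the same way with $\bw_\ones$.

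I expect no step to be genuinely hard. The only point needing care is the identification in (2): whether "accessibility lifts exactly as a bilinear form" is read with the shared truth basis, which is what forces $\ones\in\rs(\bH_W)$, or as a scalar bilinear readout of $\bA_{ij}$, for which only the factorization is needed. I would state that the truth-basis reading is intended, matching \eqref{eq:rellift}, so that Theorem~\ref{thm:rel} applies literally. I would also note that (3) uses only the factorization and the readouts of $\bv(\varphi)$ and $\ones$, not the full lift.
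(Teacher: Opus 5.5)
Your proposal is correct and matches the paper's proof. Parts (1) and (2) are Theorems~\ref{thm:rank} and \ref{thm:rel}, with Corollaries~\ref{cor:mindim} and \ref{cor:relrank}, instantiated with $W$ for $\mathcal{D}_e$, and (3) is the direct substitution of $\bv(\varphi) = \bH_W^{\top}\bw_\varphi^{\top}$ and $\bA = \bH_W^{\top}\widehat{\bA}\bH_W$. Your explicit checks of the row space of $\bP_\varphi$ and of the truth-basis reading of the bilinear lift are details the paper leaves implicit.
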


\begin{proof}
    (1) and (2) are simply Theorems~\ref{thm:rank} and \ref{thm:rel}, with $W$ for $\mathcal{D}_e$. (3) is substitution.
\end{proof}

\begin{figure}[ht]
    \centering
    \begin{tikzpicture}[x=1cm, y=1cm]
    \draw (0,0) rectangle (3.4,3.4);
    \draw (3.75,0) rectangle (4.0,3.4);
    \draw (4.9,0) rectangle (5.7,3.4);
    \draw (6.1,1.3) rectangle (6.9,2.1);
    \draw (7.3,1.3) rectangle (8.1,2.1);
    \draw (8.6,1.3) rectangle (8.85,2.1);
    \node at (1.7,1.7) {$\bA$};
    \node[above, font=\footnotesize] at (3.875,3.4) {$\bv(\varphi)$};
    \node at (4.45,1.7) {$=$};
    \node at (5.3,1.7) {$\bH_W^{\top}$};
    \node at (6.5,1.7) {$\widehat{\bA}$};
    \node[above, font=\footnotesize] at (7.7,2.1) {$\bH_W^{} \bH_W^{\top}$};
    \node[above, font=\footnotesize] at (8.725,2.1) {$\bw_\varphi^{\top}$};
    \node[below, font=\footnotesize] at (1.7,0) {$n \times n$};
    \node[below, font=\footnotesize] at (3.875,0) {$n \times 1$};
    \node[below, font=\footnotesize] at (5.3,0) {$n \times d$};
    \node[below, font=\footnotesize] at (6.5,1.3) {$d \times d$};
    \node[below, font=\footnotesize] at (7.7,1.3) {$d \times d$};
    \node[below, font=\footnotesize] at (8.725,1.3) {$d \times 1$};
    \draw[decorate, decoration={brace, mirror, amplitude=5pt}] (4.9,-0.55) -- (5.7,-0.55) node[midway, below=6pt, font=\footnotesize, align=center] {one\\expansion};
    \draw[decorate, decoration={brace, mirror, amplitude=5pt}] (6.1,-0.55) -- (8.85,-0.55) node[midway, below=6pt, font=\footnotesize] {$d \times d$ computation};
    \end{tikzpicture}
    \caption{The compressed modal accumulation of (3), $\bA\, \bv(\varphi) = \bH_W^{\top}\, \widehat{\bA}\, (\bH_W \bH_W^{\top})\, \bw_\varphi^{\top}$, with $d \ll n$. To the right of $\bH_W^{\top}$, every block is $d \times d$ or $d \times 1$, so the accumulation is a $d \times d$ computation, and one expansion through $\bH_W^{\top}$; the threshold checks of \eqref{eq:box} then apply to the expanded vector unchanged. The $\bH_W \bH_W^{\top}$ is as a single $d \times d$ block, because it is formed once from the geometry and shared across propositions $\varphi$.}
    \label{fig:worlds}
\end{figure}

The reflexive and transitive frames of the modal logics that concern the intensional layer have accessibility matrices of varying rank; the empty relation has rank zero, and imposes only the $\ones$ requirement, a universal relation has rank one and compresses to a single dimension, a strict linear order, of rank $n - 1$, forces the free regime by Corollary~\ref{cor:relmin}, its row and column spaces together spanning $\R^{1 \times n}$, as does a reflexive linear order, whose matrix is upper triangular with unit diagonal, and the identity relation (the frame of the trivial modality), by Corollary~\ref{cor:relrank}; the strict future accessibility of a discrete time sort, therefore, admits exact lift only over the free carrier of that sort. Since the operators of \eqref{eq:box} read $\bA$ only through its Boolean support, an approximate carrier need only reproduce the support of $\bA$ for the modal verdicts to survive, a weaker requirement than exact bilinear factorization\footnote{The product-modality is future work, which concerns the support factorization in its own right; we do not pursue the approximate modal case here.}.

\section{Relaxation}\label{sec:relax}

Exact lift is a subspace condition that either holds or fails outright; learned geometries will fail it for most predicates, so we might wonder, then how far is a geometry from carrying a predicate at all? The direct construction in the proof of Theorem~\ref{thm:rank} suggests that exact lift places $\bt_P$ in $\rs(\bH)$, and the distance from $\bt_P$ to $\rs(\bH)$ is a least-squares quantity.

\subsection{Defect}\label{subsec:relax}

\begin{definition}[defect]\label{def:defect}
    For a predicate $P$ with truth row $\bt_P$ and a geometry $\bH$, the defect is
    \[
      \delta(P; \bH) = \min_{\bw \in \R^{1 \times d}} \bigl\lVert \bw \bH - \bt_P \bigr\rVert_2 .
    \]
\end{definition}

\begin{figure}[ht]
    \centering
    \begin{tikzpicture}[x=1cm, y=1cm, >={Latex[length=2mm]}]
    \coordinate (A) at (0,0);
    \coordinate (B) at (6,0);
    \coordinate (C) at (7.4,1.6);
    \coordinate (D) at (1.4,1.6);
    \fill[black!6] (A) -- (B) -- (C) -- (D) -- cycle;
    \draw[black!50] (A) -- (B) -- (C) -- (D) -- cycle;
    \node[anchor=north west, font=\footnotesize] at (5.2,0.02) {$\rs(\bH)$};
    \coordinate (O) at (2.2,0.55);
    \coordinate (T) at (4.6,3.1);
    \coordinate (P) at (4.6,0.95);
    \fill (O) circle (1.2pt);
    \draw[->, thick] (O) -- (T) node[above, font=\footnotesize] {$\bt_P$};
    \draw[->, thick] (O) -- (P) node[below=3pt, font=\footnotesize] {$\bt_P \bH^{\dagger}\bH$};
    \draw[dashed] (P) -- (T);
    \node[right=1pt, font=\footnotesize] at ($(P)!0.55!(T)$) {$\delta(P;\bH)$};
    \draw ($(P)+(-0.18,0)$) -- ($(P)+(-0.18,0.18)$) -- ($(P)+(0,0.18)$);
    \draw ($(O)!0.55cm!(P)$) arc[start angle=9.5, end angle=46.7, radius=0.55cm];
    \node[font=\footnotesize] at ($(O)+(0.85,0.32)$) {$\theta$};
    \draw[->, thick, black!60] (O) -- (0.9,1.25) node[left, font=\footnotesize] {$\ones$};
    \end{tikzpicture}
    \caption{The defect of Definition~\ref{def:defect}, in which the truth row $\bt_P$ is projected onto $\rs(\bH)$; the length is $\delta(P;\bH)$ and the angle $\theta$ is the principal angle between $\bt_P$ and the row space, with $\delta = \lVert \bt_P \rVert \sin\theta$. Exact lift is $\theta = 0$. The $\ones$ row is drawn nearly in the plane, as Section~\ref{sec:measure} finds it for trained geometries; the principal angles of Table~\ref{tab:measure} are the angles $\theta$ taken jointly over $\rs[\bT;\ones]$.}
    \label{fig:defect}
\end{figure}

\begin{proposition}\label{prop:defect}
    $\delta(P; \bH) = \lVert \bt_P (I_V - \bH^{\dagger} \bH) \rVert_2$, where $\bH^{\dagger}$ is the Moore--Penrose pseudoinverse, and $\bH^{\dagger} \bH$ is the orthogonal projector onto $\rs(\bH)$; the minimizer is $\bw = \bt_P \bH^{\dagger}$. Moreover, $\delta(P; \bH) = 0$ if and only if $\bt_P \in \rs(\bH)$, so that if $\ones \in \rs(\bH)$, exact lift of $P$ is $\delta(P; \bH) = 0$.
\end{proposition}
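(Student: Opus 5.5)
The plan is to treat the minimization in Definition~\ref{def:defect} as an ordinary least-squares problem in the row-vector unknown $\bw$. Transposing, it becomes $\min_{\bw}\lVert \bH^{\top}\bw^{\top} - \bt_P^{\top}\rVert_2$, which is the distance from $\bt_P^{\top}$ to $\cs(\bH^{\top}) = \rs(\bH)^{\top}$. The first step is the standard projection fact: the minimum of $\lVert \mathbf{y}-\mathbf{x}\rVert$ over a subspace is attained at the orthogonal projection of $\mathbf{y}$, and the minimum value is the norm of the residual. This follows from the Pythagorean decomposition $\bt_P = \bt_P\Pi + \bt_P(I_V-\Pi)$, where the two pieces are orthogonal and the second is orthogonal to every element of $\rs(\bH)$.

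The second step identifies the projector. I will invoke the Penrose identities $\bH\bH^{\dagger}\bH = \bH$ and $(\bH^{\dagger}\bH)^{\top} = \bH^{\dagger}\bH$. Together with $(\bH^{\dagger}\bH)^2 = \bH^{\dagger}\bH$, they show that $\Pi = \bH^{\dagger}\bH$ is a symmetric idempotent, hence an orthogonal projector. Its range, acting on row vectors from the right, is contained in $\rs(\bH)$, because $\mathbf{x}\bH^{\dagger}\bH = (\mathbf{x}\bH^{\dagger})\bH$. It also fixes $\rs(\bH)$ pointwise, because $\mathbf{a}\bH\Pi = \mathbf{a}\bH\bH^{\dagger}\bH = \mathbf{a}\bH$. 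Its range is therefore exactly $\rs(\bH)$.

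Combining the two steps, the minimizer is $\bw = \bt_P\bH^{\dagger}$, since $\bw\bH = \bt_P\Pi$, and the minimum value is $\lVert \bt_P(I_V-\bH^{\dagger}\bH)\rVert_2$. The minimizer may fail to be unique when $\bH$ has a nontrivial left kernel, but the stated $\bw$ is one minimizer, which is all the statement requires.

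For the ``moreover'' part, $\delta=0$ holds if and only if the residual vanishes, if and only if $\bt_P = \bt_P\Pi$, if and only if $\bt_P\in\rs(\bH)$. Now suppose $\ones\in\rs(\bH)$. Then $\rs[\bt_P;\ones]\subseteq\rs(\bH)$ is equivalent to $\bt_P\in\rs(\bH)$. By Theorem~\ref{thm:rank}, applied to the singleton lexicon $\{P\}$, the former is exact lift of $P$, so exact lift of $P$ is equivalent to $\delta(P;\bH)=0$.

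I expect no real obstacle. The only point needing care is the projector step, specifically checking that the range of $\bH^{\dagger}\bH$ equals $\rs(\bH)$ rather than merely being contained in it. The plan is to handle this with the Penrose equations as above, instead of going through an SVD.
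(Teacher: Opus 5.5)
Your proposal is correct and follows the same route as the paper's sketch. You identify $\{\bw\bH\}$ with $\rs(\bH)$, take the orthogonal projection $\bt_P\bH^{\dagger}\bH$ as the nearest point, read the defect off the residual, and obtain the final clause from Theorem~\ref{thm:rank} applied to $\{P\}$ once $\ones\in\rs(\bH)$; your Penrose-identity check that $\bH^{\dagger}\bH$ projects onto exactly $\rs(\bH)$, and your remark that $\bt_P\bH^{\dagger}$ is one (the minimum-norm) minimizer rather than the unique one, fill in details the paper only asserts.
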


\begin{proof}[Proof sketch]\label{prf:defect}
    $\{\bw \bH\}$ is $\rs(\bH)$, and the closest point of a subspace to $\bt_P$ is its orthogonal projection $\bt_P \bH^{\dagger} \bH$, with $\bt_P (I - \bH^{\dagger} \bH)$ vanishes exactly on the subspace. The final clause is Theorem~\ref{thm:rank}.
\end{proof}

The defect is computable on any trained embedding by one least-squares solve per predicate, with predicate extensions supplied by lexical resources or feature norms \cite{mcrae2005semantic}; it is a statistic of the trained geometry and  a property of the training data and objective, which is unconstrained; the condition is, therefore, a specification, and the defect is a measurement.

\subsection{Threshold and separability}\label{subsec:thresholdseparability}

Conceptual spaces \cite{Gardenfors2000} and degree semantics \cite{kennedy2007vagueness} treat graded predicates as regions in a structured space, with the classical predicate recovered by a threshold, $\den{P}(x) = \mathbf{1}[\, d(x, R_P) \le \theta_P \,]$; its simplest instance is a half-space, what the exact lift becomes when the equality in \eqref{eq:lift} is weakened to a sign condition.

\begin{definition}[thresholded lift]\label{def:thresholdlift}
    $P$ is linearly separable over $\bH$ if there are $\bw \in \R^{1 \times d}$ and $\theta \in \R$ with $\bw h(d_i) > \theta$ when $P(d_i) = 1$ and $\bw h(d_i) < \theta$ when $P(d_i) = 0$.
\end{definition}

\begin{proposition}\label{prop:sep}
    If $P$ has an exact lift over $\bH$, then $P$ is linearly separable over $\bH$; the converse fails.
\end{proposition}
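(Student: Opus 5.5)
For the forward direction I will use the direct construction from the proof of Theorem~\ref{thm:rank}. An exact lift $L_P$ with $L_P \bH = \bM_P$ has first row $\bw_P \in \R^{1 \times d}$ satisfying $\bw_P \bH = \bt_P$. So $\bw_P h(d_i) = P(d_i) \in \{0,1\}$ for every $i$. Taking $\bw = \bw_P$ and $\theta = \tfrac12$ gives $\bw h(d_i) = 1 > \tfrac12$ when $P(d_i) = 1$ and $\bw h(d_i) = 0 < \tfrac12$ when $P(d_i) = 0$, which is Definition~\ref{def:thresholdlift}. Strictly speaking, only the first row of \eqref{eq:lift} is used; the second row, which forces $\ones \in \rs(\bH)$, is not needed. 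The degenerate cases where $P$ is constant are covered automatically, since one side of the separation condition is then vacuous.

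For the failure of the converse, I will give an explicit small counterexample and use Theorem~\ref{thm:rank}(3) to rule out an exact lift. The cleanest choice is $d = 1$ and $V = 3$, with geometry $\bH = (1,\,2,\,3)$ and predicate $P$ having $\bt_P = (0,0,1)$.
\begin{itemize}
\item Separability holds with $\bw = 1$ and $\theta = 2.5$.
\item The vector $\bc = (2,-1,0)^{\top}$ lies in $\ker \bH$, since $2\cdot 1 - 2 = 0$.
\item But $\bt_P \bc = 0$ while $\ones \bc = 1 \neq 0$, so condition (3) fails and no exact lift exists.
\end{itemize}
It is also worth exhibiting a failure that does not come only from the $\ones$ row. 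For that I will append a constant coordinate, $\bH = [1\;2\;3;\;1\;1\;1]$, so that $\ones \in \rs(\bH)$. Now $\rs(\bH)$ is the affine functions of the positions $1,2,3$, and $(0,0,1)$ is not affine in position, since the differences are $0$ and then $1$. Equivalently, $\bc = (1,-2,1)^{\top} \in \ker \bH$ but $\bt_P \bc = 1$. The separability witness $\bw = (1,0)$, $\theta = 2.5$ still works. So even over a geometry that admits affine lifts, separability does not imply exact lift.

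The main obstacle is only in choosing the counterexample so that it fails condition (3) through the truth row itself, not merely through the $\ones$ row. Otherwise the affine remedy of Proposition~\ref{prop:affine} would dissolve it. The homogenized example handles this, and every remaining step is a direct check.
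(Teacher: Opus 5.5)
Your proposal is correct and follows essentially the paper's proof. The forward direction reads off the first row $\bw_P$ with $\bw_P\bH = \bt_P$ and thresholds at $\theta = \tfrac12$. The converse places entities at positions on a line, where a threshold separates but the truth row is not affine in position: the paper takes $\bH = [\,1\;2\;3\;4\,]$ and $\bt_P = (0,0,1,1)$ and solves coordinates directly to rule out $\bt_P \in \operatorname{span}\{\bH, \ones\}$, which has the same content as your homogenized example with kernel vector $(1,-2,1)^{\top}$.
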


\begin{proof}\label{prf:sep}
    With $\bw = \bw_P$ from the direct construction, $\bw h(d_i) = P(d_i) \in \{0, 1\}$, and $\theta = \tfrac12$ separates. For the converse, take $d = 1$, $\bH = [\,1\;2\;3\;4\,]$, and $\bt_P = (0,0,1,1)$: the threshold $\theta = \tfrac52$ separates, while $\bt_P \in \operatorname{span}\{\bH, \ones\}$ would require $a + c = 0$ and $2a + c = 0$ from the first two coordinates, forcing $a = c = 0$, which contradicts the third coordinate; exact lift, therefore, fails, in the affine sense of Proposition~\ref{prop:affine}, as well as the linear one.
\end{proof}

\begin{figure}[ht]
    \centering
    \begin{tikzpicture}[x=1.6cm, y=1.6cm]
    \draw[-{Latex[length=2mm]}] (0.5,0) -- (4.7,0) node[right] {$h(d_i)$};
    \draw[-{Latex[length=2mm]}] (0.5,0) -- (0.5,1.4) node[above] {$P$};
    \foreach \x/\y in {1/0, 2/0, 3/1, 4/1} \fill (\x,\y) circle (2pt);
    \foreach \x in {1,2,3,4} \node[below=3pt, font=\footnotesize] at (\x,0) {$\x$};
    \draw[dashed] (0.6,-0.5+0.4*0.6) -- (4.6,-0.5+0.4*4.6);
    \node[font=\footnotesize, anchor=west] at (3.6,0.55) {best affine readout};
    \draw[thick] (2.5,-0.25) -- (2.5,1.25);
    \node[font=\footnotesize, above] at (2.5,1.25) {threshold $\theta = 5/2$};
    \end{tikzpicture}
    \caption{The example of \Cref{prop:sep}: four entities on a line, $\bt_P = (0,0,1,1)$. No affine function of position takes the values $0,0,1,1$, so the affine defect is positive, while the threshold at $5/2$ separates the extension exactly. The measurements of Section~\ref{sec:measure} find learned geometries in this position for nearly every predicate: high separability, positive defect.}
    \label{fig:threshold}
\end{figure}

A linear probe tests the linear separability of a predicate over a learned geometry: \cite{alain2017understanding} keep the probe linear, so that accuracy tracks the representation, and \cite{hewitt2019designing} caution that it tracks the capacity of the probe as well; we must account for this, so we do so with held-out scoring throughout, and a Gaussian null on the held-out error of Section~\ref{sec:measure}. Proposition~\ref{prop:sep} is the probe in the vector logic: thresholded relaxation of the homomorphism condition, strictly weaker than the exact condition, and the defect $\delta(P; \bH)$ is the condition's own measure. Probing practice fits each predicate independently; the rank criterion adds that a shared readout basis across the lexicon requires the $\ones$-row, or, equivalently, a bias, and that the ranks of Corollaries~\ref{cor:mindim} and \ref{cor:relrank} bound what any geometry of a given dimension can carry, before any probe is fit.

\subsection{Parallelograms}\label{sec:parallel}

Let us now return to the four-entity lexicon of Example~\ref{ex:running}: $\bt_{\mathrm{write}} = (1,1,0,0)$, $\bt_{\mathrm{published}} = (1,0,1,0)$, $\ones = (1,1,1,1)$. The three rows are independent, so $\rk[\bT; \ones] = 3$, and, by Corollary~\ref{cor:dual}, the admissible dependences form a one-dimensional space (the orthogonal complement of the row space), spanned by
\[
  \bc = (1, -1, -1, 1)^{\top}:
  \qquad
  \bt_{\mathrm{write}}\, \bc = 1 - 1 = 0, \quad
  \bt_{\mathrm{published}}\, \bc = 1 - 1 = 0, \quad
  \ones\, \bc = 0 .
\]
Every minimal exact geometry for this lexicon, therefore, satisfies exactly one dependence,
\[
  h(d_1) - h(d_2) = h(d_3) - h(d_4),
\]
which form a parallelogram, with the interpretation: the difference between a writer who is published and one who is unpublished equals the difference between a nonwriter who is published and one who is unpublished. The simplest nontrivial solutions of the constraints the rank criterion imposes are analogy structures of the kind reported for word embeddings since the classic \cite{mikolov2013linguistic}. 

\begin{figure}[ht]
    \centering
    \begin{tikzpicture}[x=1cm, y=1cm, >={Latex[length=2mm]}]
    \coordinate (h4) at (0,0);
    \coordinate (h2) at (3.2,0.4);
    \coordinate (h3) at (1.1,2.0);
    \coordinate (h1) at (4.3,2.4);
    \draw[black!40] (h4) -- (h2) -- (h1) -- (h3) -- cycle;
    \foreach \p/\lab/\pos in {h1/{$h(d_1)$: writer, published}/above right, h2/{$h(d_2)$: writer, unpublished}/below right, h3/{$h(d_3)$: nonwriter, published}/above left, h4/{$h(d_4)$: nonwriter, unpublished}/below left} {
      \fill (\p) circle (1.6pt);
      \node[\pos, font=\footnotesize] at (\p) {\lab};
    }
    \draw[->, thick, black!60] (h2) -- (h1) node[midway, right, font=\footnotesize] {$+\mathrm{published}$};
    \draw[->, thick, black!60] (h4) -- (h3) node[midway, left, font=\footnotesize] {$+\mathrm{published}$};
    \draw[->, thick, black!60] (h4) -- (h2) node[midway, below, font=\footnotesize] {$+\mathrm{write}$};
    \draw[->, thick, black!60] (h3) -- (h1) node[midway, above, font=\footnotesize] {$+\mathrm{write}$};
    \end{tikzpicture}
    \caption{The parallelogram forced on every minimal exact geometry for the four-entity lexicon (simplified from Figure~\ref{fig:bothregimes}): $h(d_1) - h(d_2) = h(d_3) - h(d_4)$, so the displacement for \textit{published} is the same, whether taken from a writer or from a nonwriter, and likewise for \textit{write}. The single admissible dependence $\bc = (1,-1,-1,1)$ is this figure.}
    \label{fig:parallelogram}
\end{figure}

\begin{remark}[A note on parallelogram]
    Corollary~\ref{cor:dual} gives a parallelogram kernel to every minimal exact geometry for a two-feature lexicon on four entities, an exact geometry of higher rank having kernel zero, and the empirical literature says trained geometries (approximately) have parallelogram kernels for feature pairs of this shape; the two routes to the parallelogram are distinct: the present derivation applies to exact geometries, and the measured geometries of Section~\ref{sec:measure} are inexact, so the analogies observed in trained embeddings are accounted for from the training objective, through the co-occurrence statistics it factorizes \cite{ethayarajh2019analogies,allen2019analogies}.
\end{remark}

In general, the admissible dependences are $\rs[\bT; \ones]^{\perp}$, and integer vectors in that complement with two entries $+1$ and two entries $-1$ are exactly the parallelograms the lexicon permits at all. This is not deep, but follows from pairs of entity pairs that agree on feature difference.

\begin{proposition}[Boolean cube]\label{prop:cube}
    Let $\Lex$ consist of $m$ binary features on the $V = 2^m$ entities of $\{0,1\}^m$, feature $k$ having truth row $x \mapsto x_k$. Then $\rk[\bT; \ones] = m + 1$, and the space of admissible dependences $\rs[\bT; \ones]^{\perp}$, of dimension $2^m - m - 1$, is spanned by the parallelogram vectors $\be_x - \be_{x + \be_k} - \be_z + \be_{z + \be_k}$ over coordinates $k$ and points $x, z$ with $x_k = z_k = 0$.
\end{proposition}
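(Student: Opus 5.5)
We prove the rank statement first, then show that the parallelogram vectors lie in the complement, and finally that they span it.

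\emph{Rank.} The rows of $[\bT;\ones]$ are the coordinate functions $x \mapsto x_k$, $k = 1,\dots,m$, together with the constant function, all viewed as vectors in $\R^{1\times 2^m}$ indexed by the points of $\{0,1\}^m$. They are linearly independent. Suppose $a_0 + \sum_k a_k x_k = 0$ for all $x$. Evaluating at $x = 0$ gives $a_0 = 0$, and evaluating at $x = \be_k$ then gives $a_k = 0$. Hence $\rk[\bT;\ones] = m+1$. Since row space and kernel are orthogonal complements in $\R^V$ (as in the proof of Theorem~\ref{thm:rank}), $\rs[\bT;\ones]^{\perp}$ has dimension $2^m - m - 1$.

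\emph{Containment.} Fix $k$ and points $x, z$ with $x_k = z_k = 0$, and set $\bc = \be_x - \be_{x+\be_k} - \be_z + \be_{z+\be_k}$. Then $\ones\,\bc = 1-1-1+1 = 0$. For a feature $j \ne k$, the pair $x$, $x+\be_k$ takes the same value of $x_j$, and so does the pair $z$, $z+\be_k$; so $\bt_j\bc = x_j - x_j - z_j + z_j = 0$. For feature $k$ itself, $\bt_k\bc = 0 - 1 - 0 + 1 = 0$. Every parallelogram vector therefore lies in $\rs[\bT;\ones]^{\perp}$.

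\emph{Spanning.} This is the main step. It suffices to exhibit $2^m - m - 1$ linearly independent parallelogram vectors, or to show that the quotient of $\R^V$ by their span $N$ has dimension at most $m+1$. I would take the quotient route. Modulo $N$, every difference $\be_{x+\be_k} - \be_x$ with $x_k = 0$ is congruent to $\be_{\be_k} - \be_0$, since we may take $z = 0$ in the defining vector. Call this class $u_k$; the point is that the class does not depend on $x$.

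Now write any point $y = \be_{k_1} + \cdots + \be_{k_r}$ with distinct $k_1,\dots,k_r$, and walk from $0$ to $y$ by flipping these coordinates one at a time. The walk telescopes:
\[
\be_y = \be_0 + \sum_{s=1}^{r} \bigl(\be_{\be_{k_1}+\cdots+\be_{k_s}} - \be_{\be_{k_1}+\cdots+\be_{k_{s-1}}}\bigr) \equiv \be_0 + \sum_{s=1}^{r} u_{k_s} \pmod N .
\]
So every basis vector, and hence all of $\R^V$, lies modulo $N$ in the span of the $m+1$ classes $\be_0, u_1, \dots, u_m$. This gives $\dim(\R^V/N) \le m+1$, that is, $\dim N \ge 2^m - m - 1$.

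Combined with the containment $N \subseteq \rs[\bT;\ones]^{\perp}$ and the dimension count from the rank step, this forces $N = \rs[\bT;\ones]^{\perp}$. The only delicacy is confirming that the telescoping uses only admissible flips, each from a point with the relevant coordinate $0$ to the point with it set to $1$. This holds because coordinates are added in order and each $k_s$ is not yet present when it is flipped.
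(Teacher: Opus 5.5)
Your proof is correct and is essentially the paper's argument in dual form: the paper shows that any $f$ orthogonal to every parallelogram vector has increments along coordinate $k$ independent of the base point, hence by induction from the origin is affine, so that $N^{\perp}$ is the space of affine functions; you establish the same base-point independence for the classes $u_k$ in $\R^V/N$ and telescope from the origin to get $\dim(\R^V/N) \le m+1$. Your rank and containment steps match the paper's, and your closing dimension count plays the role of the paper's passage from $N^{\perp}$ back to $N$.
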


\begin{proof}\label{prf:cube}
    The rows $x \mapsto x_k$ and $x \mapsto 1$ are the affine functions' basis on the cube, and are independent, giving the rank. Each parallelogram vector is orthogonal to every affine function $f(x) = a_0 + \sum_k a_k x_k$, since $f(x) - f(x + \be_k) - f(z) + f(z + \be_k) = -a_k + a_k = 0$, so the parallelogram span lies in $\rs[\bT; \ones]^{\perp}$. 
    
    \noindent Conversely, let $f \in \R^V$ be orthogonal to every parallelogram vector; then $f(x + \be_k) - f(x) = f(z + \be_k) - f(z)$ for all $x, z$ with $x_k = z_k = 0$, so the increment along coordinate $k$ is a constant $a_k$, and induction on the number of nonzero coordinates gives $f(x) = f(0) + \sum_k a_k x_k$, an affine function. The orthogonal complement of the parallelogram span is, therefore, the space of affine functions, and the parallelogram span is its complement.
\end{proof}

\begin{figure}[ht]
    \centering
    \begin{tikzpicture}[x={(1cm,0cm)}, y={(0.32cm,1cm)}, z={(0.62cm,0.36cm)}, >={Latex[length=2mm]}]
    \coordinate (000) at (0,0,0);
    \coordinate (100) at (2.8,0,0);
    \coordinate (010) at (0,2.2,0);
    \coordinate (110) at (2.8,2.2,0);
    \coordinate (001) at (0,0,2.0);
    \coordinate (101) at (2.8,0,2.0);
    \coordinate (011) at (0,2.2,2.0);
    \coordinate (111) at (2.8,2.2,2.0);
    \fill[black!10] (010) -- (110) -- (111) -- (011) -- cycle;
    \draw[black!45, dashed] (000) -- (001);
    \draw[black!45, dashed] (001) -- (101);
    \draw[black!45, dashed] (001) -- (011);
    \draw[black!60] (000) -- (100) -- (110) -- (010) -- cycle;
    \draw[black!60] (100) -- (101) -- (111) -- (110);
    \draw[black!60] (010) -- (011) -- (111);
    \draw[thick] (010) -- (110) -- (111) -- (011) -- cycle;
    \foreach \p in {000,100,010,110,001,101,011,111} \fill (\p) circle (1.4pt);
    \node[below left, font=\footnotesize] at (000) {$000$};
    \node[below right, font=\footnotesize] at (100) {$100$};
    \node[left, font=\footnotesize] at (010) {$010$};
    \node[below right=-2pt, font=\footnotesize] at (110) {$110$};
    \node[above left=-2pt, font=\footnotesize] at (001) {$001$};
    \node[right, font=\footnotesize] at (101) {$101$};
    \node[above left, font=\footnotesize] at (011) {$011$};
    \node[above right, font=\footnotesize] at (111) {$111$};
    \draw[->, thick] (000) -- ($(000)!0.5!(100)$) node[midway, below, font=\footnotesize] {$a_1$};
    \draw[->, thick] (000) -- ($(000)!0.5!(010)$) node[midway, left, font=\footnotesize] {$a_2$};
    \draw[->, thick] (000) -- ($(000)!0.5!(001)$) node[midway, above right=3pt, font=\footnotesize] {$a_3$};
    \node[font=\footnotesize, align=left, anchor=west] at (5.4,1.4) {$h(x) = a_0 + \sum_k a_k x_k$\\[2pt] shaded face: $\be_{010} - \be_{110} - \be_{011} + \be_{111}$\\[2pt] $\rk[\bT;\ones] = 4$, $\;\dim \rs[\bT;\ones]^{\perp} = 4$};
    \end{tikzpicture}
    \caption{The Boolean cube of Proposition~\ref{prop:cube} for $m = 3$: eight entities, three features, and a minimal exact geometry in $\R^4$ (drawn in three dimensions, with the affine offset suppressed). Every minimal exact geometry is an affine image of the cube, so each face is a parallelogram; the four independent faces span the admissible dependences, of dimension $2^3 - 3 - 1 = 4$, and the affine functions $a_0 + \sum_k a_k x_k$ are licensed by the lexicon.}
    \label{fig:cube}
\end{figure}

In a minimal exact geometry for a full factorial feature lexicon, every coordinate difference is, thus, a constant vector, which is the setting in which analogy by vector arithmetic is exact; an exact geometry of higher rank satisfies a subset of these equalities, and the free geometry keeps the $2^m$ entity vectors affinely independent.

\subsection{Measurement}\label{sec:measure}

We turn, now, to implementation and experiment\footnote{Code is available at the project repo.
Minimal working verifiable code was written by the author; LLM assistance for this paper was used in the writing and expanding of the experiment software at larger scale, building from the author's minimal working code.}. We computed a held-out error $\hat{\delta}$, written so as to keep it apart from the defect $\delta$ of Definition~\ref{def:defect}, the probe, the principal angles, and the dimension sweep of Corollary~\ref{cor:mindim}, on two classic lexicons against two likewise classic embeddings. The geometries are a 300-dimensional GloVe embedding \cite{pennington2014glove} and the 300-dimensional word2vec embedding of \cite{mikolov2013distributed}; each is used as its geometry matrix $\bH$ without centering, whitening, normalization, or truncation. The projection residual of Proposition~\ref{prop:defect} depends on $\rs(\bH)$ alone; the held-out error below depends on the coordinates through its penalty, and is reported on the coordinates as published. 

The first lexicon uses the McRae feature norms \cite{mcrae2005semantic}. Following the norms' own inclusion threshold, a feature holds of a concept when at least five participants produced it. Entries below the threshold enter $\bT$ as false, so the truth-conditional reading treats nonproduction as a negative judgment, which is an assumption about the norms rather than a datum in them.

Concepts the norms distinguish by sense, such as \textit{bat} in its animal and baseball senses, share one word vector and receive the union of their features. The 541 concepts collapse to 532 words, all present in GloVe and 531 in word2vec. Retaining features assigned to at least fifteen words, and to at most fifteen fewer than all of them, gives 76 predicates and $\rk[\bT;\ones]=77$ in both embeddings.

The second lexicon uses WordNet hypernyms \cite{fellbaum1998wordnet}. We select monosemous nouns among each embedding's twenty thousand most frequent tokens. Hypernyms of depth at least four with 30--500 members become predicates. GloVe supplies 6271 nouns and 171 predicates, with 165 distinct extensions and $\rk[\bT;\ones]=166$; for word2vec the figures are 5257 nouns and 143 predicates, with 139 distinct extensions and $\rk[\bT;\ones]=139$. Predicates with identical extensions remain separate rows.

In every case, $\rk[\bT;\ones]<d=300<V$, which places all four cells in the compressed regime. The available dimension therefore permits exact monadic compression, although whether a pretrained geometry realizes it remains to be tested.

The full-domain quantity is the affine defect of Proposition~\ref{prop:affine}, normalized by the centered norm of the truth row,
\[
  \delta^{+}(P; \bH) = \frac{\bigl\lVert \bt_P \bigl(I_V - (\bH^{+})^{\dagger} \bH^{+}\bigr) \bigr\rVert_2}{\lVert \bt_P - \bar{t}_P \ones \rVert_2}, \qquad \bar{t}_P = |P| / V,
\]
so that a constant readout scores $1$ and an exact affine lift scores $0$. The held-out error $\hat{\delta}(P; \bH)$ is a predictive quantity, and differs from it in target, in regularization, and in normalization. The entities are split into five folds $F_1, \dots, F_5$; for each $k$, a readout $(\bw^{(k)}, b^{(k)})$ is fit by ridge regression over $\bH^{+}$ on the entities outside $F_k$, with the intercept unpenalized, and
\[
  \hat{\delta}(P; \bH) = \left( \frac{\sum_{k} \sum_{i \in F_k} \bigl(\bw^{(k)} h(d_i) + b^{(k)} - \bt_P(i)\bigr)^2}{\sum_{k} \sum_{i \in F_k} \bigl(\bt_P(i) - \bar{t}^{(k)}_P\bigr)^2} \right)^{1/2},
\]
with $\bar{t}^{(k)}_P$ the mean of $\bt_P$ over $F_k$, so predicting the fold mean scores $1$. The penalty is chosen per predicate from $\{10^{-2}, 10^{-1}, 1, 10, 10^{2}, 10^{3}\}$ by the minimum of this same held-out error, a selection that biases $\hat{\delta}$ downward; choosing it on inner folds of the training data instead moves the medians below by at most $0.01$. The null is the same statistic on a Gaussian matrix of the same shape. A positive $\hat{\delta}$ is, therefore, consistent with an exact affine lift on the full domain, which a penalized fit to a subset need not recover, and the two quantities are reported side by side. The probe is a logistic regression with unit inverse regularization and balanced class weights, its probabilities cross-fitted over stratified five-fold splits and scored by the area under the curve. Ranks are taken at the default tolerance of the numerical library, the largest singular value times the larger matrix dimension times machine precision; principal angles are the arccosines of the singular values of $Q_U^{\top} Q_W$, with $Q_U$ and $Q_W$ orthonormal bases of $\rs(\bH)$ and $\rs[\bT;\ones]$ from singular value decompositions at a relative cutoff of $10^{-10}$. Table~\ref{tab:measure} gives medians over predicates.

\begin{table}[h]
    \centering
    \begin{tabular}{llrrlrrrr}
    \toprule
    \textbf{Lexicon} & \textbf{Geometry} & \textbf{\boldmath $V$} & \textbf{\boldmath $\rk[\bT;\ones]$} & \textbf{\boldmath $\delta^{+}$ (min)} & \textbf{\boldmath $\hat{\delta}$} & \textbf{\boldmath $\hat{\delta}$ null} & \textbf{AUC} & \textbf{Angle band} \\
    \midrule
    McRae   & GloVe    &  532 &  77 & 0.51 (0.25) & 0.85 & 1.03 & 0.95 & 10--18 (20--26) \\
    McRae   & word2vec &  531 &  77 & 0.53 (0.25) & 0.89 & 1.03 & 0.96 & 10--20 (21--27) \\
    WordNet & GloVe    & 6271 & 166 & 0.89 (0.66) & 0.93 & 1.02 & 0.95 & 32--45 (68--70) \\
    WordNet & word2vec & 5257 & 139 & 0.86 (0.54) & 0.92 & 1.02 & 0.96 & 25--44 (67--69) \\
    \bottomrule
    \end{tabular}
    \caption{Median full-domain affine defect $\delta^{+}$, with its minimum over predicates in parentheses; median held-out error $\hat{\delta}$, with Gaussian null; median probe AUC; and the second through tenth principal angles in degrees between $\rs(\bH)$ and $\rs[\bT;\ones]$ (null given in parentheses). The first principal angle is $1.1$, $2.1$, $2.1$, and $5.2$ degrees; the angle between the $\ones$ row itself and $\rs(\bH)$ is $1.3$, $2.6$, $2.1$, and $5.4$ degrees, so the first principal direction lies near the $\ones$ row.}
    \label{tab:measure}
\end{table}

No predicate admits an exact affine lift in any of the four cells: the median full-domain defect ranges from $0.51$ to $0.89$ (see Table~\ref{tab:measure}), and the smallest over all predicates is $0.25$. The embeddings, nevertheless, have lower defects than Gaussian controls. For a centered truth row and a random $d$-dimensional subspace of $\ones^\perp$, the expected squared defect is $1-d/(V-1)$, and the observed control medians match its square root to two decimals.

Jointly over the lexicon, the principal angles give the same account of the obstruction: the lexical row space lies closer to $\rs(\bH)$ than the Gaussian controls do, and the constant row is nearly contained in it, while the smallest principal angle stays positive in every cell, beyond numerical tolerance. Writing
\[
  U=\rs(\bH), \qquad S=\rs[\bT;\ones],
\]
we, therefore, have
\[
  U\cap S=\{0\},
  \qquad
  \bigl(U+\operatorname{span}\{\ones\}\bigr)\cap S
  =\operatorname{span}\{\ones\}.
\]
The second equality follows because $\ones\in S$: if $u+c\ones\in S$ with $u\in U$, then $u\in U\cap S$. Thus no predicate of the lexicon admits an exact linear readout, and none other than a constant one an exact affine readout, agreeing with the individual defects.

Under the held-out error, recovery is imperfect as well: only one McRae predicate, \textit{musical instrument}, has $\hat{\delta}<0.5$, and none does on WordNet. Held-out error and probe AUC are strongly negatively correlated on McRae ($\rho=-0.86$ and $-0.82$), so predicates with better discrimination generally have lower prediction error. High AUC, however, establishes neither exact affine recovery nor strict separability.

We test separability directly by seeking $\bw$ and $b$ such that
\[
  \bw h(d_i)+b \ge 1 \quad\text{if } P(d_i)=1,
  \qquad
  \bw h(d_i)+b \le -1 \quad\text{otherwise}.
\]
Every McRae predicate is separable in both embeddings and their Gaussian controls, so this test does not distinguish the geometries there. On WordNet, GloVe separates $159$ of $171$ predicates and word2vec $136$ of $143$, compared with $87$ and $80$ in the controls.

Size accounts for the controls, which separate no predicate above $63$ members in the GloVe cell or $61$ in the word2vec cell. The geometries separate every predicate up to $100$ members, $19$ of $22$ and $21$ of $24$ between $100$ and $200$, and $3$ of $12$ and $3$ of $7$ above; among the predicates larger than any sampled control managed, $62$ of $74$ over GloVe and $53$ of $60$ over word2vec are half-spaces. Failures include \textit{action}, \textit{activity}, and \textit{content}, whereas \textit{city}, \textit{municipality}, and \textit{urban area} are separable in both embeddings. \textit{Urban area} gives a direct instance of Proposition~\ref{prop:sep}, strictly separable at affine defects of $0.68$ and $0.54$; \textit{district} and \textit{administrative district} give the converse, the two predicates with the lowest held-out error over GloVe being among its failures.

Replacing first-sense WordNet labels with monosemous assignments raises median AUC from $0.89$ to $0.95$, and lowers held-out error only from $0.94$ to $0.93$. Retaining progressively more principal directions of $\bH$ lowers held-out error gradually, without a pronounced transition at the lexicon's rank, which Corollary~\ref{cor:mindim} permits: the bound guarantees that some geometry of sufficient dimension carries the lexicon, not that a truncation of a pretrained geometry does.

Recovery varies by predicate type as well: on McRae, taxonomic predicates have median held-out errors of $0.65$ and $0.71$, against $0.87$ and $0.90$ for attributive predicates, with color and size worst; on WordNet, administrative and geographic categories and substance nouns are predicted best, and abstract nouns such as \textit{idea} and \textit{information} worst.\footnote{This agrees with \cite{rubinstein2015well}, who found better distributional recovery of taxonomic than attributive properties.}

\begin{figure}[h]
    \centering
    \includegraphics[width=0.75\textwidth]{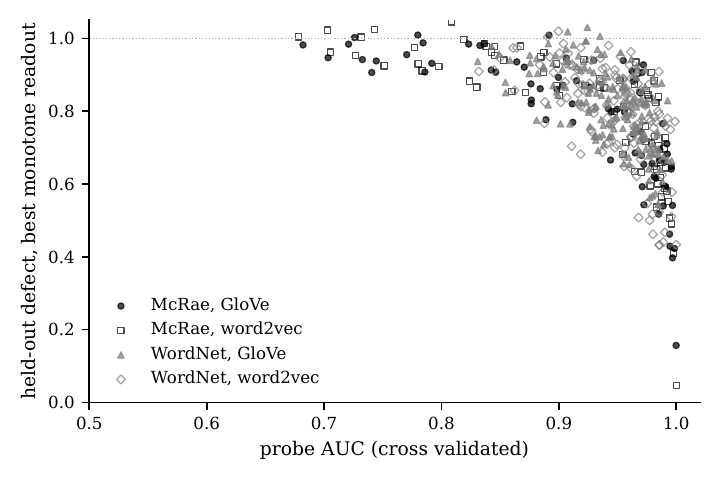}
    \caption{Per-predicate held-out error $\hat{\delta}$ under the isotonic readout against cross-validated probe AUC, drawn from the per-predicate tables. Predicates concentrate at high AUC and high error: discriminable and poorly recovered. The lower right corner, low error at high AUC, is nearly empty.}
    \label{fig:defectauc}
\end{figure}

Monotone transformations of the affine score lower the held-out error to between $0.77$ and $0.86$, against Gaussian controls near $1.00$, a gain over the ridge readout of $0.03$ to $0.06$ on McRae and $0.10$ to $0.15$ on WordNet. Both are fit within training folds, isotonic regression on cross-fitted training scores.

Recovery is much better in a small tail: for \textit{musical instrument}, on eighteen concepts, isotonic error falls to $0.16$ and $0.05$, and birds and their parts, fruit, cities, and countries follow between $0.40$ and $0.60$, at AUC above $0.98$. The tenth percentile of the isotonic error lies between $0.54$ and $0.68$, so substantial error remains for most predicates.

A two-layer readout with 64 hidden units (validated on synthetic data requiring nonlinear recovery) raises median error on McRae by $0.03$ and $0.04$ relative to isotonic regression, and lowers it on WordNet by $0.06$ and $0.07$, to $0.77$ and $0.75$, while its AUC does not exceed the logistic probe's. These are results about held-out entities: on the finite domain itself, distinct entity vectors permit recovery of every predicate by an unrestricted decoder.

No predicate of either lexicon admits an exact linear or affine lift, so the geometries lie outside the exact regime of Theorem~\ref{thm:rank}. Many predicates are, nevertheless, strictly separable, placing the WordNet cells inside the thresholded regime of Proposition~\ref{prop:sep} well beyond the sampled controls, and the McRae cells inside it at a rate the controls match. Held-out recovery varies across predicates and remains imperfect under every readout family examined. Exact representability, separability, and predictive performance, therefore, give distinct assessments of one geometry.

\subsection{Training toward exact regime}\label{sec:train}

Whether the exact regime is reachable at the dimensions current embeddings use, and at what distributional cost, is the question of this section.

We train a geometry $\bH \in \R^{d \times V}$ under a mixed objective. Let $\bE \in \R^{300 \times V}$ be the pretrained geometry with its row means removed, and $\bT_c$ the truth matrix with its row means removed. For readouts $\bW \in \R^{300 \times d}$ and $\bW_T \in \R^{|\Lex| \times d}$ and an intercept $\bw_0 \in \R^{|\Lex|}$,
\begin{equation}\label{eq:mixed}
  \Loss(\bH) = (1 - \lambda)\, \min_{\bW} \frac{\lVert \bW \bH - \bE \rVert_F^2}{\lVert \bE \rVert_F^2}
  + \lambda\, \min_{\bW_T, \bw_0} \frac{\lVert \bW_T \bH + \bw_0 \ones^{\top} - \bT \rVert_F^2}{\lVert \bT_c \rVert_F^2},
\end{equation}
so the distributional term is the squared relative error of reconstructing the pretrained vectors linearly from $\bH$, and the truth term is the squared relative residual of the affine lift of Proposition~\ref{prop:affine}, taken jointly over the lexicon. The objective is minimized in these squared terms; the numbers reported below, and plotted in Figures~\ref{fig:switch} and~\ref{fig:frontier}, are their square roots, written $\epsilon(\bH)$ for the distributional reconstruction error and $\delta_{\Lex}(\bH)$ for the joint truth defect, so that both are relative norms on the scale of $\delta^{+}$. Both inner minima are least-squares problems with closed-form solutions, and $\Loss$ is minimized by alternating least squares between $\bH$ and the readouts, with a ridge of $10^{-8}$ on the $\bH$ solve and the rows of $\bH$ renormalized after each step, since $\Loss$ is invariant under $GL(d)$ acting on $\bH$; each run starts from a Gaussian $\bH$ with a fixed seed and takes forty iterations. Every entity is a column of $\bH$, so the experiment is transductive, as an embedding layer is. At $\lambda = 0$, training reconstructs the pretrained geometry; at $\lambda = 1$, the truth term alone is minimized, the surplus directions of $\bH$ above the rank are untrained, and the distributional error at $\lambda = 1$ carries no information, so the frontier is read at $\lambda \in (0, 1)$.

Two quantities are available in closed form. 
\begin{enumerate}
    \item The minimum of the truth term over all $d$-dimensional geometries is $\bigl(\sum_{k > d} \sigma_k^2\bigr)^{1/2} / \lVert \bT_c \rVert_F$ for the singular values $\sigma_k$ of $\bT_c$, since the intercept absorbs the row means; this floor is positive for $d < \rk \bT_c = \rk[\bT;\ones] - 1$ by Corollary~\ref{cor:affmindim}, and zero from there on, which is Corollary~\ref{cor:mindim} in the affine form of Proposition~\ref{prop:affine}, and it fixes the location of the exactness transition in advance. 

    \item The second is a linear-exact benchmark: the variance retained at dimension $d$ under the constraint $\rs(\bH) \supseteq \rs[\bT;\ones]$ of Theorem~\ref{thm:rank} is the fraction of $\lVert \bE \rVert_F^2$ captured by the best $d$-dimensional row space containing $\rs[\bT;\ones]$, namely $\rs[\bT;\ones]$ together with the top $d - \rk[\bT;\ones]$ principal directions of $\bE$ projected off it, and at a least-squares optimum the retained variance is $1 - \epsilon^2$.
\end{enumerate}

 The benchmark is one dimension more constrained than the training criterion, which is affine, and needs $\rs(\bH)$ to contain a complement of $\ones$ in $\rs[\bT;\ones]$, such as $\rs(\bT_c)$; the affine benchmark, with $\rs(\bT_c)$ in place of $\rs[\bT;\ones]$, retains at least as much, and the two differ by at most the variance of one direction.

The grid runs over $d \in \{10, 20, 40, 80, 120, 160, 200, 300\}$ and $\lambda \in \{0, 0.1, 0.5, 0.9, 1\}$. A predicate counts as recovered within tolerance when its affine defect on the trained geometry, $\lVert \bw_P \bH + b_P \ones^{\top} - \bt_P \rVert_2 / \lVert \bt_P - \bar{t}_P \ones \rVert_2$ with the readout refit by least squares, is below $0.05$; Figure~\ref{fig:switch} plots this fraction. The McRae cell with word2vec in this was matched to the embedding by the training loader, which looks tokens up as given, whereas the diagnostics loader of Section~\ref{sec:measure} adds a case fallback; that holds $V = 529$, one predicate falls below fifteen positives, and it trains on $75$ predicates with $\rk[\bT;\ones] = 76$, while Table~\ref{tab:measure} reports the diagnostics cell with $76$ predicates and rank $77$; the dotted line of Figure~\ref{fig:switch} for that cell sits at $75$ accordingly.

At $\lambda = 1$, the optimizer attains the closed-form floor of $\delta_{\Lex}$ at every grid point: to four decimals where the floor is positive (on McRae $0.702$, $0.554$, and $0.350$ at $d = 10, 20, 40$; on WordNet with GloVe $0.014$ at $d = 160$; with word2vec $0.058$ at $d = 120$), and to $10^{-13}$ where the floor is zero, which is every grid point at or above $\rk[\bT;\ones] - 1$. The affine exact regime is, therefore, reached at numerical tolerance at $d = 80$ and above on McRae, at $d = 200$ and above on WordNet with GloVe, and at $d = 160$ and above with word2vec, and the location of the transition is the floor's, $\rk[\bT;\ones] - 1$, with the grid serving to check that the optimizer finds it. The per-predicate fraction within tolerance at $\lambda = 1$ is $1$ at those points, $0.947$ at $d = 160 < 165$ on WordNet with GloVe, and $0.706$ at $d = 120 < 138$ with word2vec, which is the approximate carriage a positive floor admits: below the rank no geometry carries the whole lexicon, and most of it can still lie within tolerance.

At $d = 300$, the constraint is relatively cheap on McRae and relatively affordable on WordNet: the linear-exact benchmark retains $99.2$ and $98.5$ percent of the pretrained variance on McRae and $82.5$ and $79.5$ percent on WordNet, the constraint having rank $166$ and $139$ over a spectral tail at $V \approx 6000$. On the frontier at $\lambda = 0.9$, $\delta_{\Lex}$ is $0.003$ and $0.004$ on McRae at $\epsilon = 0.085$ and $0.121$ (retention $99.3$ and $98.5$ percent), with every predicate within tolerance; on WordNet it is $0.046$ and $0.044$ at $\epsilon = 0.365$ and $0.415$ (retention $86.7$ and $82.8$ percent), with $0.75$ of the predicates within tolerance after forty iterations.

The affine exact regime is, therefore, reached at numerical tolerance at $d = 80$ and above on McRae, at $d = 200$ and above on WordNet with GloVe, and at $d = 160$ and above with word2vec, which are the grid points at or above the minimal affine dimension of Corollary~\ref{cor:affmindim}.

\begin{figure}[ht]
    \centering
    \includegraphics[width=0.85\textwidth]{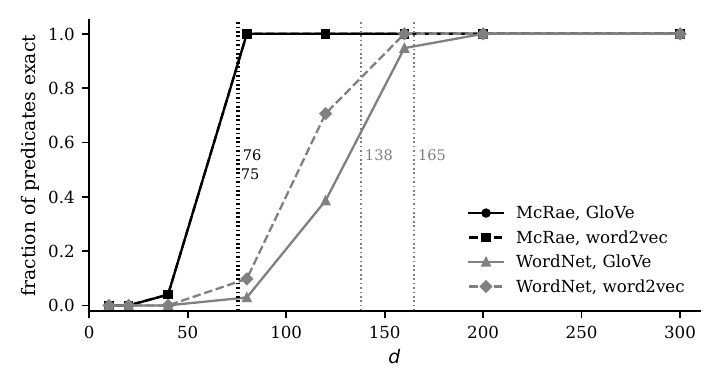}
    \caption{Fraction of predicates with affine defect below $0.05$ on the trained geometry (recovered within tolerance), after alternating least squares at $\lambda = 1$, against the trained dimension $d \in \{10, 20, 40, 80, 120, 160, 200, 300\}$, drawn from the run's output. Dotted lines mark the minimal affine dimension $\rk[\bT;\ones] - 1$ of Corollary~\ref{cor:affmindim}: $76$ and $75$ (McRae; the word2vec training cell holds $75$ predicates, see the text), $165$ (WordNet, GloVe), $138$ (WordNet, word2vec). Wherever the fraction reads $1$, the joint residual is below $10^{-13}$; at the two grid points just below that dimension it equals the closed-form minimum, $0.014$ and $0.058$.}
    \label{fig:switch}
\end{figure}

\begin{figure}[ht]
    \centering
    \includegraphics[width=0.85\textwidth]{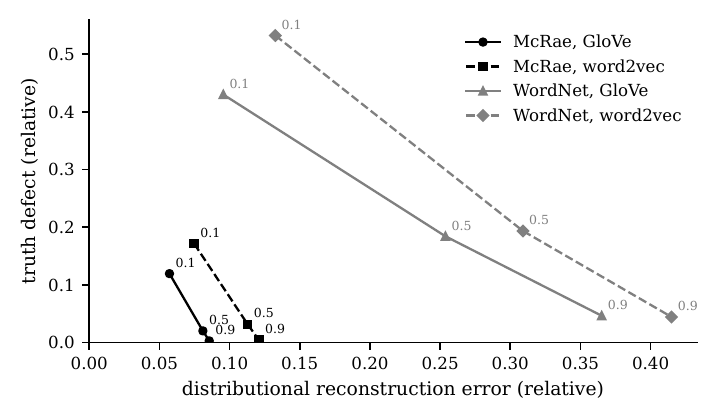}
    \caption{The frontier at $d = 300$: joint truth defect $\delta_{\Lex}$ against distributional reconstruction error $\epsilon$, both relative, for $\lambda \in \{0.1, 0.5, 0.9\}$ after forty iterations, all four cells. On McRae, $\delta_{\Lex}$ reaches $0.003$ and $0.004$ at $\epsilon = 0.085$ and $0.121$; on WordNet, $0.046$ and $0.044$ at $0.365$ and $0.415$, where the optimizer is still descending at $\lambda = 0.9$.}
    \label{fig:frontier}
\end{figure}

Reaching the floor on the training entities shows that an embedding layer attains the affine exact regime under the truth objective; the frontier shows that most of the distributional variance survives within tolerance of it; exact realizability at $d \ge \rk[\bT;\ones]$ is Corollary~\ref{cor:mindim}, and the experiment shows that the optimizer finds it. The result is transductive: generalization of the truth-conditional structure to unseen entities, and acquisition of the same geometry from distributional training alone, remain open.

\section{Discussion}\label{sec:disc}

Consider the distinction between \emph{representability} and \emph{representation}. The free construction establishes when a truth-conditional lexicon can be carried exactly by a finite-dimensional vector space; what follows is the empirical analysis, in which we ask how closely existing representations approach that construction. The defect introduced above makes this quantitative: zero defect means exact, while positive defect measures difference from it.

We showed that ordinary distributional embeddings already lie substantially closer to the truth-conditional geometry than a random subspace of the same dimension; this is consistent with the fact that semantic features are often linearly represented in learned vector spaces \cite{park2024linear}; it also gives a geometric interpretation of superposition \cite{elhage2022toy}: when the available dimension is below the minimum required for exactness, several truth conditions must share dimensions, and the resulting dependencies appear as nonzero defect. The principal angles (as reported in Table~\ref{tab:measure}), therefore, quantify the extent to which a distributional geometry already contains the structure required by the lexicon, as opposed to mere generic similarity.

Distributional learning, by itself, leaves every predicate of both lexicons outside the exact regime, by the affine defects and the principal angles, and the held-out error orders the predicates: concrete category predicates are recovered far better than attributive and abstract predicates, and monotone (or nonlinear) readouts recover much of the same ordering. This is compatible with the literature on conceptual spaces \cite{Gardenfors2000}, while also placing a limit on a purely geometric account: a region in a conceptual space need not constitute an exact extension. Human categorization provides an independent reason not to expect exact linear separability as a universal property \cite{medin1981linear}.

At $d=300$, the truth-conditional constraint can be imposed, while preserving most of the variance of the original embeddings. On McRae, the lexicon is brought within tolerance with very little distributional loss; on WordNet, the constraint is more expensive, and remains compatible with substantial retention. Under the truth term alone, the optimizer reaches the affine exact regime at numerical tolerance at every trained dimension from $\rk[\bT;\ones] - 1$ upward, where the closed-form floor is zero, and matches the positive floor below it. Dimension is thereby eliminated as the reason the pretrained embeddings fail to carry the lexicon; the objective, the corpus, and the labels remain as candidates, which the present experiments leave apart from one another.

This has a useful consequence for the relation between distributional and truth-conditional semantics: the two need not compete for representation space. A single geometry can retain substantial distributional structure, while also carrying a truth-conditional lexicon. The experiments leave open whether language exposure alone produces such a geometry. Skip-gram's relation to shifted PMI factorization \cite{levy2014neural} gives the distributional geometry a corpus-level interpretation, but there is nothing in that objective that requires the resulting space to satisfy the truth-conditional constraints. The present results, therefore, support a weaker (and more precise) claim: \emph{distributional learning supplies information from which truth-conditional structure may be (partially) recovered; exactness requires either an additional constraint or some mechanism that supplies equivalent information.}

The framework also clarifies the status of compositionality. Once the leaves of a derivation are represented exactly, the homomorphism conditions determine the corresponding Boolean composition without further learning. Outside the exact regime, each leaf is only approximately represented, so compositional error can accumulate. A system may, therefore, perform well on individual semantic probes, while failing a composed entailment. Good distributional similarity alone, consequently, leaves exact logical behavior undetermined.

The empirical scope of the present study is deliberately narrower than the formal framework we are likewise developing; here, the measurements concern one-place predicates over static entity geometries. Relations, represented bilinearly on $\bH\otimes\bH$, and higher arities follow the same geometric strategy, but were not evaluated here, which we leave for future work. Quantifiers are treated in Proposition~\ref{prop:determiners} for the exact regime, where they compute by bilinear accumulation on the readouts, and their behavior under positive defect remains to be measured. Likewise, the formal conditions governing computation after the embedding layer raise a separate question. Proposition~\ref{prop:sentences} confines the present compression result to the leaves of a derivation; whether attention and feed-forward computation can, themselves, realize the required multilinear maps remains open.

The vector logic supplies a specification of what a representation carrying a truth-conditional structure has to satisfy, and the defect turns the specification into a measurable property of an empirical geometry, so that the question whether a learned representation carries such a structure has a computable answer. That is the principal role of the framework; the origin of the gradient observed here, and the emergence of truth-conditional structure from language exposure alone, lie outside what it decides.

\section{Conclusion}\label{sec:conclusion}

A truth-conditional lexicon imposes linear constraints on the representation space; the rank of those constraints gives the minimum dimension for exactness, while the defect measures how closely a lower-dimensional (or otherwise unconstrained) representation approaches that ideal. The empirical results show that existing distributional embeddings contain substantial structure relevant to the lexicon, while every predicate of both lexicons fails the exact criterion, linear and affine, over both geometries. Once the truth-conditional constraint enters the objective, the affine exact regime is attained at numerical tolerance from the predicted dimension $\rk[\bT;\ones] - 1$ upward, and, at $d = 300$, the lexicon comes within tolerance of it while most of the original distributional variance is retained; distributional training, by itself, leaves exactness to a further constraint.

The formal framework extends beyond the monadic case studied here. Relations and higher-arity predicates can be treated on tensor-product spaces, quantifiers require corresponding operators on compressed readouts, and contextual representations can be evaluated layer by layer. The most direct empirical continuation is to replace the pretrained distributional matrix in the mixed objective of Section~\ref{sec:train} with corpus statistics to test whether exact truth-conditional structure can emerge from co-occurrence information alone, and at what distributional cost; we expect not.

What does it mean for a vector geometry to carry a truth-conditional semantics compositionally is answered here for finite monadic lexicons and binary relations; how far does an empirical geometry fall short of that condition is measured for two lexicons and two embeddings; how a learning system might acquire such a representation is another problem.

\section*{Acknowledgments}\label{sec:acknowledgments}

The observation that an embedding layer is a linear map on one-hot inputs, and its pedagogical framing, are due entirely to Sebastian Raschka, which set this paper in motion.

\bibliography{compression}

\end{document}